\begin{document}
\title{Secure Split Learning against Property Inference, Data Reconstruction, and Feature Space Hijacking Attacks}
\titlerunning{Secure Split Learning}
%
\author{Yunlong Mao\inst{1} \and
Zexi Xin\inst{1} \and
Zhenyu Li\inst{2} \and
Jue Hong \and
Qingyou Yang \and \\
Sheng Zhong\inst{1}
}
\authorrunning{Y. Mao, Z. Xin, et al.}
%
\institute{State Key Laboratory for Novel Software Technology, Nanjing University, China \and
University of California San Diego, USA}
\maketitle              
\begin{abstract}
Split learning of deep neural networks (SplitNN) has provided a promising solution to learning jointly for the mutual interest of a guest and a host, which may come from different backgrounds, holding features partitioned vertically. However, SplitNN creates a new attack surface for the adversarial participant, holding back its practical use in the real world. By investigating the adversarial effects of highly threatening attacks, including property inference, data reconstruction, and feature hijacking attacks, we identify the underlying vulnerability of SplitNN and propose a countermeasure. To prevent potential threats and ensure the learning guarantees of SplitNN, we design a privacy-preserving tunnel for information exchange between the guest and the host. The intuition is to perturb the propagation of knowledge in each direction with a controllable unified solution. To this end, we propose a new activation function named R\textsuperscript{3}eLU, transferring private smashed data and partial loss into randomized responses in forward and backward propagations, respectively. We give the first attempt to secure split learning against three threatening attacks and present a fine-grained privacy budget allocation scheme. The analysis proves that our privacy-preserving SplitNN solution provides a tight privacy budget, while the experimental results show that our solution performs better than existing solutions in most cases and achieves a good tradeoff between defense and model usability.


\keywords{Privacy preservation \and Inference attack \and Reconstruction attack \and Feature space hijacking attack \and Split learning}
\end{abstract}
\section{Introduction}
Private data, such as biological information and shopping history, is potentially valuable for commercial usage. Commercial agencies can learn users' preferences and make proper recommendations using their private data. Meanwhile, users could enjoy personalized services by sharing their privacy with service providers. However, both agencies and users are worried about their private data being abused. Besides, since commercial agencies nowadays are dedicated to providing high-quality services in vertical industries, like social networks or online shopping, their user profiles are business-relevant and highly homogeneous. But it commonly requires diversified features for building deep learning models. For these reasons, using multifarious private data for building a satisfying model is essential but challenging.

Fortunately, the collaborative learning paradigm \cite{mcmahan2017communication} has emerged as a promising solution. Collaborative learning enables participants from different interests to learn a shared model jointly. A well-known collaborative learning paradigm is federated learning \cite{mcmahan2017communication}, focusing on the coordination of distributed participants. Meanwhile, another paradigm split neural network (SplitNN for short) \cite{gao2020end,ceballos2020splitnn} is designed explicitly for vertically partitioned features. 
Participants from different backgrounds could contribute with distinct feature representations. By combining different features, SplitNN is supposed to be more expressive in the real world. Notably, SplitNN has already been used for building industry-level frameworks, such as FATE \cite{fate} and Syft \cite{syft}.

However, collaborative learning paradigms are faced with severe security issues. Roughly speaking, there are three kinds of threats, inference attack \cite{nasr2019comprehensive,salem2019ml}, reconstruction attack \cite{hitaj2017deep,salem2020updates}, and poisoning attack \cite{tolpegin2020data,HuangMGL0X21}. An inference attack discloses attributes or membership information of specific data samples of the participants, while a reconstruction attack seeks to generate data samples similar to participants' private data. In \cite{erdogan2021unsplit}, the reconstruction attack and the inference attack are studied simultaneously, enabling the host to recover the client's data and allowing an honest-but-curious host to infer the labels with desirable accuracy. Unlike these two kinds of threats, a poisoning attack aims to put harmful data into collaborative learning for malicious purposes rather than stealing private information. Moreover, a feature space hijacking attack considers a malicious participant, enlarging the attack effect of inference and reconstruction. Some excellent defensive solutions have been proposed for federated learning since it is a more general paradigm. According to the techniques used, these solutions can be roughly classified into differential privacy solutions \cite{sunppaifl21,liu2021flame} and secure multiparty computation solutions \cite{zheng2022aggregation,nguyen2021flguard}.

Unfortunately, security issues in SplitNN are barely discussed. As we illustrate in Figure~\ref{fig:splitnn}, the workflow of SplitNN has a unique asymmetric design. Therefore, most solutions for secure federated learning are not suitable for SplitNN. Secure multiparty computing solutions like homomorphic encryption can achieve ideal data confidentiality \cite{zhang2020batchcrypt,pereteanu2022split}, but the overhead introduced is still far away from practical uses. It has been proved in \cite{gawron2022feature} that the defense performance of applying differential privacy in a general way to split learning is far away from the expectation when dealing with a feature space hijacking attack (FSHA) \cite{Pasquini2021UnleashingTT}. To defend against FSHA, two novel methods are designed in \cite{erdogan2023defense} for a split learning client to detect if it is being targeted by a hijacking attack or not. One approach is an active method relying on observations about the learning object, and the other one is a passive method at a higher computing cost. Thus, for the first attempt at privacy-preserving SplitNN for both the server and the client, we offer a unified solution for addressing different threats, including inference, reconstruction, and hijacking attacks. We focus on the abovementioned threats since they share a similar adversarial goal of privacy disclosure, while poisoning attacks need to be studied separately \cite{fang2020influence,mao2021romoa}.
\begin{figure}
    \centering
    \includegraphics[scale=0.6]{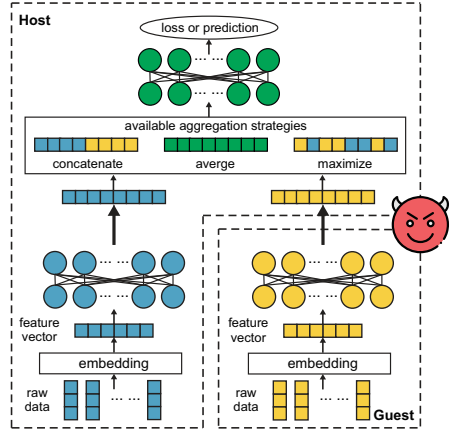}
    \caption{An illustration of SplitNN architecture.}
    \label{fig:splitnn}
\end{figure}

We also note that there is an inherent contradiction \cite{erdogan2021unsplit} between privacy preservation and model usability, especially when private information of two sides should be considered in SplitNN. Through the investigation of the attack effect, we find that the attacker has sufficiently high success rates when disclosing the privacy of either party. To defeat the attacks \cite{hitaj2017deep,salem2019ml,Pasquini2021UnleashingTT} and give privacy guarantees on both sides in SplitNN, we make the following contributions:
\begin{itemize}
    \item We investigate the privacy leakage issue in SplitNN by adapting inference and reconstruction attacks from federated learning. For the first attempt at securing SplitNN against multiple attacks, we propose a unified solution based on a newly designed activation function.  
    \item We offer strong privacy guarantees for both sides of SplitNN. Moreover, a fine-grained privacy budget allocation scheme is given to achieve more efficient perturbations and improve privacy budget utilization.
    \item We implement and evaluate our solution using real-world datasets for different split learning tasks. The experimental result shows that our solution outperforms existing solutions when concurrently considering privacy preservation and model usability.
\end{itemize}


\section{Problem Statement}
\subsection{Deep Learning}
Given a training dataset $\bm{X}$ and DNN model parameters $\bm{\theta}$, a training task is to find approximately optimal $\bm{\theta}$ by minimizing a pre-defined loss function $\mathcal{L}$, regarding input $\bm{X}$. We assume that the optimizer used is a mini-batch stochastic gradient descent (SGD) algorithm, which updates $\bm{\theta}$ with a batch input of $\bm{X}$ iteratively. Assuming the batch size is $M$, then the total loss of $\bm{\theta}$ for a batch input $\bm{x} = \{ x_i | x_i \in \bm{X}, i \in [1,M] \}$ should be $\sum_{x \in \bm{x}} \mathcal{L}(\bm{\theta},x)$ in the $t$-th training iteration. The gradients of $\bm{\theta}$ for model updating should be estimated by $\frac{1}{M} \sum_{x \in \bm{x}} \nabla_{\bm{\theta}} \mathcal{L} (\bm{\theta}, x)$ approximately. Hence, parameters $\bm{\theta}$ can be updated as $\bm{\theta}^{t+1} = \bm{\theta}^{t} - \frac{1}{M} \sum_{x \in \bm{x}} \nabla_{\bm{\theta}} \mathcal{L} (\bm{\theta}, x)$. This mini-batch SGD-based optimizing procedure should be repeated until the model usability meets the requirement or the maximal count of iterations reaches.

\subsection{Split Learning}
SplitNN is an emerging collaborative learning paradigm partitioning the original neural network into different parts. There are commonly two types of participants in SplitNN, the host and the guest. For each training iteration, the forwarding input of the guest should be evaluated locally and passed to the host. Then the backpropagation should be initialized by the host and propagated to the guest. According to related studies \cite{gupta2018distributed}, there exist several configurations of SplitNN. This paper will focus on the SplitNN design suitable for building models jointly with vertically partitioned features \cite{ceballos2020splitnn}. An example of SplitNN for such collaboration is shown in Figure~\ref{fig:splitnn}.

Assume that host and guest in Figure~\ref{fig:splitnn} are two companies aiming to predict customer behaviors collaboratively. After an embedding procedure, the host who holds label data merges embedded vectors using a predefined strategy, say averaging. Then the host will finish the rest of the forward propagation and initiate backward propagation. In Table~\ref{tab:benchmark}, we give a benchmark of SplitNN using different merging strategies for building recommendation models with two public datasets, MovieLens \cite{harper2015Movielens} and BookCrossing \cite{ziegler2005improving}. For misaligned features where the server and the client may hold different shapes of features for the same data entry, we use zero padding to complement the missing features by zeros to retain the same shape of feature vectors. We give the top-10 hit ratio for the test in Table~\ref{tab:benchmark}, which are the average of 30 runs in the same setting. During the experiment, we divide the original feature vector of 160 dimensions into two parts. One is 96 dimensional while the other is 64 dimensional. We notice that merging strategies have no significant influence on performance. Thus, averaging will be used as the default setting.
\begin{table}[ht]
\centering
\caption{Top-10 hit ratio (\%) of SplitNN using different merging strategies. Batch size 32, learning rate 0.01. Model architectures are shown in the appendix.}
\label{tab:benchmark}
\begin{tabular}{ccccccccc}
\hline
\multicolumn{2}{c}{\multirow{2}{*}{}}                             & \multirow{2}{*}{concat} & \multicolumn{5}{c}{element-wise}                 & \multirow{2}{*}{no split} \\ \cline{4-8}
\multicolumn{2}{c}{}                                              &                              & max     & sum     & avg. & mul. & min     &                           \\ \hline
\multicolumn{1}{c|}{\multirow{2}{*}{MovieLens}}                     & padding     & 56.62 & 56.26 & 56.35 & 56.89 & 55.23 & \textbf{57.19} & \multirow{2}{*}{57.21}  \\ \cline{2-8}
\multicolumn{1}{c|}{}   & non-padding & 55.38 & 54.75 & 54.95 & \textbf{55.72} & 54.52  & 55.08 &    \\ \hline
\multicolumn{1}{c|}{\multirow{2}{*}{Book Crossing}} & padding  & \textbf{61.70} & 60.84 & 60.21 & 61.16 & 58.99 & 60.98 & \multirow{2}{*}{61.92}  \\ \cline{2-8}
\multicolumn{1}{c|}{}  & non-padding & 58.80 & 59.34 & \textbf{59.44} & 59.10 & 58.85 & 59.02 &      \\ \hline
\end{tabular}
\end{table}

\subsection{Threat Model}
Unlike updating local models separately in federated learning, participants of SplitNN are required to update local models cooperatively. Interactions between two parties pose new threats to each other \cite{fu2022label,li2022ressfl}. Hence, we will investigate privacy leakage threats from two perspectives. The host and the guest are honest but curious about the private data of each other. They are allowed to do any additional computations when they are following the split learning protocol. Assuming both parties are rational and privacy-aware, they will not exchange information except for the interactive interface shown in Figure~\ref{fig:splitnn}. Please note that label leakage attacks and defense in SplitNN have attracted much attention recently. However, these topics are out of our discussion and need to be studied separately. Therefore, both parties may carry out certain attacks to infer or reconstruct each other's private data even simultaneously. The following part introduces the attacks which are considered potential threats against one party or both parties.

\textbf{\textit{Property inference attack}}. Since the host has access to the output of the guest while the guest receives gradients containing private information of the host, an adversary can mount the property inference attack \cite{ganju2018property,luo2021feature} from both parties. Access to the output of the local model on the other side can be seen as a black-box query. In this setting, the adversary can infer properties of private data by observing query input and the corresponding output. By constructing elaborating shadow models, the adversary can steal substantial information from the target. In this way, the adversary acquires the capability of inferring some properties (such as gender and age) of the data samples used for training. Denoted by $F$, $T$, $\mathcal{L}_{F}$ and $l_i$ the inference model, target model, the loss function used for $F$ and the label of each data, the adversarial goal is
\begin{equation}
    \mathcal{A}_{PIA} = \arg\min_{F} \sum_{x_i \in \bm{X}} \mathcal{L}_{F} (F(T(x_i)), l_i), l_i \in \{0,1\}.
\end{equation}

\textbf{\textit{Data reconstruction attack}}. A generative adversarial network (GAN) is an instance of generative models designed to estimate target data distribution \cite{goodfellow2014generative}. Taking advantage of GANs, a data reconstruction attack is proposed in \cite{hitaj2017deep}. The adversary of a reconstruction attack aims to reconstruct the private training data of other participants in collaborative learning. The host or the guest or both can be adversarial. To mount the attack, the adversary augments the training data per iteration by inserting fake samples $Z$ generated by a generator $G$. The global model will serve as a discriminator $D$. The adversary will affect global model updating by deceiving the target using fake training samples. For correcting the adversary, the target participant is supposed to put more private information into the learning. In this game-style training, the adversary may obtain substantial knowledge to reconstruct data samples as similar to target data as possible. Thus, the adversarial goal can be given as
\begin{equation}
\mathcal{A}_{DRA} = \min_{G} \max_{D} \frac{1}{|X|} \sum_{x \in X} log D(x) + \frac{1}{|X|} \sum_{z \in Z} log(1-D(G(z))).
\end{equation}

\textbf{\textit{Feature space hijacking attack}}. Unlike attacks mentioned above, the feature space hijacking attack \cite{Pasquini2021UnleashingTT} considers a malicious adversary capable of manipulating the learning process. With the help of hijacking, the adversary can improve the performance of inference and reconstruction attacks. In this setting, the adversary can only be the host because label information is needed to mislead the victim. To mount the attack, the adversary uses a pilot model $\hat{f}$, an approximation of its inverse function $\hat{f}^{-1}$ with a shadow dataset and a discriminator $D$ to distinguish the output from guest model $f$ and the pilot model $\hat{f}$ during the training process. Then, the malicious host can send a suitable gradient to hijack the training of an honest guest by setting the goal as
\begin{equation*}
\mathcal{A}_{FSHA} = log(1-D(f(X_{PRIV}))).
\end{equation*}

%

\section{Privacy-Preserving Split Learning}
Our solution will be designed to preserve the privacy of the host and the guest concurrently. Ideally, the guest wants to collaborate with the host under the condition that the host should disclose no private data and vice versa. However, unlike conventional model publishing scenarios, the host and guest in SplitNN, like in Figure~\ref{fig:splitnn}, are required to exchange intermediate results continually. These continuous queries significantly increase the risk of privacy leakage for both sides. Moreover, the attack surface of SplitNN is inside the neural network, which is different from situations studied in end-to-end models \cite{mao2020private,yu2019differentially}.

Our work offers the first privacy-preserving SplitNN solution for defending against multiple attacks from two directions. The fundamental idea is to construct a bidirectional privacy-aware interface between the host and the guest. Noting that components of a neural network are loosely coupled, any output port may be a candidate for the interface. However, recent studies have proved that activation functions are more adaptive for perturbed operands \cite{gao2020adaptive}. Moreover, activation functions have various forms, which are flexible for configuration. As a result, we design a new variant of ReLU as an interface for SplitNN.

\subsection{R\textsuperscript{3}eLU: Randomized-Response ReLU}
Inspired by randomized response mechanisms \cite{warner1965randomized}, we design a new activation function R\textsuperscript{3}eLU (randomized-response ReLU) for SplitNN. Specifically, R\textsuperscript{3}eLU consists of a randomized-response procedure \cite{warner1965randomized} and a Laplace mechanism \cite{dwork2014algorithmic}. This combination is not arbitrary but a complementary result. The original randomized response is good at statistical analysis of item sets. But the activation result of an input sample is commonly a continuous variable. The Laplace mechanism is a classic approach for differential privacy, handling continuous variables. But the perturbation is hard to be controlled, especially when the sensitivity degree of a query function is relatively large. It also means that it is highly risky to adopt the Laplace mechanism to an activation function directly. In SplitNN, we consider the model held by each party as a query function. Remember that both parties need to protect their privacy, so the sensitivity is bounded by the output of the cut layer in the process of forward propagation for the guest and by the gradient in the process of backward propagation for the host.

Recall that the original ReLU is $f(v) = \max(0,v)$, $v\in\mathcal{R}$. A randomized-response variant should yield a proper substitute for replacing real activations with a probability of $p$. We consider the activations of the cut layer as item sets and apply randomized-response on them. If we yield $0$ as the substitute for $v > 0$, then we can inactivate a part of ReLU results, serving as artificial perturbations. But nothing has been changed for $v \leq 0$. Thus, it is not privacy-preserving since $f(v)=0$ also reveals private information, indicating $v \leq 0$. To enforce a strict privacy policy, we integrate a Laplace mechanism into the ReLU variant by adding noise $z \xleftarrow{r} Laplace(0,\sigma)$. In this way, we can give the definition of R\textsuperscript{3}eLU as
\begin{equation}
\text{R}^3\text{eLU}(v) = \left\{
    \begin{split}
        & \max{(0,v + z)}, \ && \text{with probability} \ p, \\
        & 0, \ && \text{with probability} \ (1-p).
    \end{split}
    \right.
\end{equation}


\subsection{Forward Propagation with R\textsuperscript{3}eLU}
In forward propagation, the guest needs to transfer local forwarding results to the host. At this point, an adversarial host can mount property inference or reconstruction attacks. To stem the leakage, we recommend replacing the original activating function with R\textsuperscript{3}eLU while leaving the rest unchanged. Algorithm~\ref{alg:re3lu} gives R\textsuperscript{3}eLU-forward procedure by integrating essential operations. Denoted by $\bm{v}^{g}$ (the superscript may be omitted for concision) input of the original ReLU of the guest and $N$ the cardinality of $\bm{v}$. The first operation is to select the top $K$ largest elements of $\bm{v}$ and zero the rest. Then the top-K elements are clipped by a hyper-parameter $C$. The abovementioned pre-processing is defined as a procedure $ClipK$, taking as input $\bm{v}$, constants $K$ and $C$, outputting $\hat{\bm{v}}$. We pre-process the inputs of $\text{R}^3$eLU to bound the sensitivity. The method $ClipK$ preserves the maximum $K$ absolute values and clips each vector in the $l_1$ norm for a clipping threshold $C$. For randomized responding activation states, we calculate the probability
\begin{equation}
p_i = \frac{1}{2}+\frac{\hat{v}_i}{\Vert \bm{\hat{v}} \Vert_\infty} \cdot (\frac{e^{\frac{\epsilon_p}{K}}}{1 + e^{\frac{\epsilon_p}{K}}} - \frac{1}{2}),
\end{equation}
where $\epsilon_p$ indicates the privacy budget of randomized responding. The state of $\hat{v}_i$ will be deactivated with probability $1 - p_i$ as per R\textsuperscript{3}eLU definition. Finally, a Laplace mechanism with privacy budget $\epsilon_l$ is integrated into the R\textsuperscript{3}eLU-forward completing the procedure. Now, the guest will transmit $\tilde{\bm{a}}^g = \text{R}^3\text{eLU-forward}(\bm{v}^g$, $C, K, N$, $\epsilon_p, \epsilon_l)$ instead of $\bm{a}^{g} = \text{ReLU}(\bm{v}^g)$ to the host.

\begin{algorithm}[ht]
\caption{R\textsuperscript{3}eLU-forward procedure.}
\label{alg:re3lu}
\LinesNumbered
\DontPrintSemicolon
\KwIn{original input $\bm{v}^g$, cardinality $N$, constants $C$ and $K$, privacy parameters $\epsilon_p$ and $\epsilon_l$, probability $\bm{p}$.}
\KwOut{activation $\bm{\tilde{a}}^g$.}
\emph{$\bm{\hat{v}}^g \leftarrow ClipK(\bm{v}^g, C, K, N)$ \tcp*[r]{pre-process}}
\For{$i \leftarrow 1$ \KwTo $N$}{
    $r \xleftarrow{r} \mathcal{N}(0,1)$ \\
    \eIf{$r < p_i$}{
        $\tilde{a}^g_i \leftarrow \max(\hat{v}^g_i + \texttt{Lap}(0, \frac{2KC}{\epsilon_l}), 0)$ \tcp*[r]{activate}
    }{
        $\tilde{a}^g_i \leftarrow 0$ \tcp*[r]{deactivate}
    }
}
\Return $\tilde{\bm{a}}^g \leftarrow \{ \tilde{a}^1, \tilde{a}^2, \ldots, \tilde{a}^N \}$
\end{algorithm}

\subsection{Private Backward Propagation}
Privacy leakage also exists from the host's perspective. When the host finishes the rest of forwarding propagation after aggregating activations $\tilde{\bm{a}}^{g}$ and $\bm{a}^{h}$, the loss produced for backward propagation contains data privacy of the host and guest. According to recent studies of backward propagation \cite{melis2019exploiting,salem2020updates}, intermediate results of model updating can cause severe data privacy leakage. Since the loss must be propagated to the guest, it is crucial to prevent the host from being attacked by an adversarial guest. However, the partial loss propagated ranges widely. Integrating a DP mechanism directly into the original ReLU is unrealistic. Due to the randomized-response design, we can construct a privacy-preserving tunnel for backward propagation atop the derivative of R\textsuperscript{3}eLU.

Recall that the derivative value of ReLU for any input is either one or zero. A randomized-response variant will perturb the binary output randomly. Besides, randomly flipping still discloses real partial losses when value ones are not flipped. Therefore, a Laplace mechanism is used in the backward procedure. Now, we give a randomized-response derivative of R\textsuperscript{3}eLU
\begin{equation}
\nabla \text{R}^3\text{eLU}(\bm{\delta}^{g}, \tilde{\bm{a}}^{g}, \bm{v}^{g}) = \left\{
    \begin{split}
        & \bm{\delta}^{g} + \bm{z}, \ && \text{with probability} \ p, \\
        & 0, \ && \text{with probability} \ (1 - p),
    \end{split}
    \right.
\end{equation}
where $\bm{\delta}^{g}$ is the partial loss for the guest model, $\bm{z}$ is artificial noise. Similar to R\textsuperscript{3}eLU-forward, R\textsuperscript{3}eLU-backward also needs some essential operations. Thus, the same $ClipK$ process for top-K selecting and scalar clipping can be adopted for R\textsuperscript{3}eLU-backward. However, different from R\textsuperscript{3}eLU, absolute values are used because the partial loss instructs the gradient descent direction. In this case, we have $\hat{\bm{\delta}}^{g}=ClipK(\bm{\delta}^{g}, C, K, N)$. Moreover, a $Sign$ process is used to obtain signs. 

For randomized responding, the probability of retaining an actual loss is
\begin{equation}
    p_i = \frac{1}{2}+\frac{|\hat{\delta_i}|}{\Vert \hat{|\delta_i|} \Vert_\infty} \cdot (\frac{e^{\frac{\epsilon_p}{K}}}{1 + e^{\frac{\epsilon_p}{K}}} - \frac{1}{2}),
\end{equation}
where $\epsilon_p$ is the privacy budget for the randomized response. We now give the backward procedure R\textsuperscript{3}eLU-backward in Algorithm~\ref{alg:re3lu-diff}. Please note that although some existing solutions choose to disturb the gradients of two parties, we choose to perturb the partial loss regarding the guest's backpropagation while keeping the partial loss of the host model unchanged. In this way, a slighter influence is caused for the host compared with disturbing all gradients directly.
\begin{algorithm}[ht]
\caption{R$^3$elu-backward procedure.}
\label{alg:re3lu-diff}
\LinesNumbered
\DontPrintSemicolon
\KwIn{partial loss $\bm{\delta}^g$, cardinality $N$, constants $C$ and $K$, privacy parameters $\epsilon_p$ and $\epsilon_l$, probability $\bm{p}$.}
\KwOut{partial loss $\tilde{\bm{\delta}}^g$.}
\emph{$\hat{|\bm{\delta}|} \leftarrow ClipK(|\bm{\delta}|, C, K, N)$} \tcp*[r]{pre-process}
\For{$j \leftarrow 1$ \KwTo $N$}{
    $r \xleftarrow{r} \mathcal{N}(0,1)$ \\
    \eIf{$r < p_i$}{
        $\hat{\delta}_i \leftarrow Sign(\delta_i) \cdot |\hat{\delta_i}| $
    }{
        \emph{$\hat{\delta}_i \leftarrow 0 $ \tcp*[r]{randomized response}}
    }
    $\tilde{\delta}^g_i \leftarrow \hat{\delta}^g_i + \texttt{Lap}(0, \frac{2KC}{\epsilon_l})$
    }

\Return $\tilde{\bm{\delta}}^g = \{ \tilde{\delta}^1, \tilde{\delta}^2, ..., \tilde{\delta}^N \}$
\end{algorithm}

\subsection{Dynamic Privacy Budget Allocation}
To further reduce privacy loss and improve the utilization of the privacy budget, we recommend allocating the privacy budget for parameters dynamically instead of allocating uniformly. Based on \cite{molchanov2019importance}, the importance of a parameter during training can be quantified by the error introduced when it is removed from the model. In particular, the importance $I_j$ of $\theta_j \in \bm{\theta}$ is the squared difference of prediction errors caused by removing $\theta_j$, i.e.,
\begin{equation}
    I_{j}=(\mathcal{L}(\bm{x},\bm{\theta})-\mathcal{L}(\bm{x},\bm{\theta} \setminus \{ \theta_j \}))^2.
\end{equation}
For efficiency concern, an approximating method is given in \cite{molchanov2019importance}, estimating the importance $I_j$ by its first-order Taylor expansion as
\begin{equation}
    \hat{I}_{j}=(\nabla_{\theta_j} \mathcal{L} (\bm{\theta}, x) \cdot \theta_j)^2.
\end{equation}

Given the importance of each parameter in the cut layer, the importance of a feature can be derived further. Specifically, the importance of a feature $U_{j}$, $j \in [1, N_u]$, where $N_u$ is the total number of neurons in the cut layer, can be calculated as joint importance of relevant parameters by summing them up. Thus, $U_{j} = \sum\nolimits_{\theta_k \in \bm{\theta}_{U_{j}}} \hat{I}_{k}$, where $\bm{\theta}_{U_{j}}$ is the set of all parameters directly connected to the $j$-th neuron.

Please note that the original importance estimation is designed for a well-trained model and cannot be directly applied to intermediate models during training. To tackle the problem, we give a dynamic estimation method by deriving the original method into a cumulative form. The importance of a feature will be accumulated as the training epoch increases. Specifically, the importance of the $j$-th neuron in the $q$-th training epoch is
\begin{equation}
\small
    U_{j}^{q} = \frac{\sum\nolimits_{\theta_{k} \in \tilde{\bm{\theta}}_{j}} \hat{I}_{k} + U_{j}^{q-1} \times (q \times \lfloor T/n_t \rfloor + (t \bmod n_t) - 1)}{q \times \lfloor T/n_t \rfloor + (t \bmod n_t)},
\label{equ:9}
\end{equation}
where $n_t$ indicates the iteration number within a training epoch, $T$ is the maximum training iteration number, and $t$ is the current training iteration. Assuming that $T \mod n_t = 0$, then $q \in [1,T/n_t]$.

Based on the importance estimated, now we can dynamically allocate the privacy budget for different features. The intuition is to give larger budgets to more important features. Before the $q$-th training epoch begins, we can estimate a feature importance vector $\bm{U} = \{ U_{1}^{q}, U_{2}^{q}, \ldots, U_{N_u}^{q} \}$. Accordingly, the privacy budget allocated to each feature will be $\epsilon_{j} = \epsilon \times U_{j}^{q}$, if $\epsilon$ is one unit budget. Then the total privacy budget for all features is $\epsilon_{F} = \sum_{j = \in [1,N_u]} \epsilon_{j}$. Now, we can set the probability of randomized response for R\textsuperscript{3}eLU-forward and R\textsuperscript{3}eLU-backward using the dynamic budget allocation, 
\begin{equation}
p_i = \frac{1}{2}+\frac{U_i^q}{\Vert \bm{U} \Vert_\infty} \cdot (\frac{e^{\frac{\epsilon_p}{K}}}{1 + e^{\frac{\epsilon_p}{K}}} - \frac{1}{2}).
\end{equation}

On the other hand, we can also allocate different privacy budgets for different iterations for better budget utilization. Given the total budget $\epsilon_T$ for all iterations, we assign the privacy budget $\epsilon_i = \frac{\epsilon_T}{2^i}$ to the $i$-th iteration as suggested by \cite{du2021dynamic}. Since $\sum_{i=1}^{\infty} \frac{\epsilon_T}{2^i} = \epsilon_T$, according to the sequential composition theory of differential privacy, we can ensure that the whole training process achieves $\epsilon_T$-differential privacy.

We note that the additional computing cost will be caused by the dynamic privacy budget allocation, which is dominated by the computation of the importance of each neuron at the cut layer. As the gradient of the cut layer can be preserved during the process of each backward propagation, the cost generated by the product of the gradients and neurons for each round is $O(N_u)$ where $N_u$ is the number of neurons of the cut layer.

\section{Privacy Analysis}
We give privacy analysis regarding the host and the guest, respectively. The analysis of privacy loss after dynamic budget allocation will also be given.
\begin{corollary}
In R\textsuperscript{3}eLU-forward procedure, the sensitivity of $\bm{\hat{v}}^g$ is bounded by $2KC$, where $C$ is the clipping constant and $K$ is the number of top values reserved.
\label{cor:step1}
\end{corollary}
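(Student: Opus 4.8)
The plan is to treat the map from a participant's private input to the clipped cut-layer vector $\hat{\bm{v}}^g = ClipK(\bm{v}^g, C, K, N)$ as the query whose $\ell_1$-sensitivity must be bounded, and to show that the $ClipK$ step confines every possible output to a fixed bounded set small enough that a single application of the triangle inequality closes the argument. Recall that the $\ell_1$-sensitivity is the supremum of $\Vert \hat{\bm{v}}^g - \hat{\bm{v}}'^g \Vert_1$ taken over all pairs of neighboring inputs, i.e. inputs differing in one private data entry. The conceptual point is that, although the raw activation $\bm{v}^g$ can change arbitrarily under a neighboring input (the guest's model is an unrestricted function), the clipped vector cannot: $ClipK$ acts as a projection onto a bounded region.

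First I would unpack $ClipK$: it keeps only the $K$ coordinates of largest absolute value, zeroes the remaining $N-K$, and clips the retained coordinates so that each has magnitude at most $C$. Consequently, for \emph{every} input the output $\hat{\bm{v}}^g$ has at most $K$ nonzero coordinates, each bounded in absolute value by $C$, so $\Vert \hat{\bm{v}}^g \Vert_1 \le KC$. This bound is uniform in the input, hence holds simultaneously for both members of any neighboring pair. Applying the triangle inequality, $\Vert \hat{\bm{v}}^g - \hat{\bm{v}}'^g \Vert_1 \le \Vert \hat{\bm{v}}^g \Vert_1 + \Vert \hat{\bm{v}}'^g \Vert_1 \le 2KC$, and taking the supremum over neighboring pairs yields the stated bound. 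I would also note the bound is tight — attained when the two clipped vectors are supported on disjoint index sets of size $K$ with every retained coordinate saturating the clip at $\pm C$ — which is precisely what justifies calibrating the Laplace noise in Algorithm~\ref{alg:re3lu} to scale $2KC/\epsilon_l$.

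The main subtlety to handle carefully is that the top-$K$ selection is data-dependent: the set of surviving indices may differ between the two neighboring inputs, so one cannot reason coordinate-by-coordinate or invoke any cancellation. Routing the estimate entirely through the uniform norm bound $\Vert \cdot \Vert_1 \le KC$ together with the triangle inequality sidesteps this difficulty, at the harmless cost of the factor $2$. A secondary point worth stating explicitly is why the randomized-response masking and the subsequent $\max(\cdot, 0)$ in Algorithm~\ref{alg:re3lu} do not enter this computation: the corollary concerns the sensitivity of $\hat{\bm{v}}^g$ itself, which is the operand fed to the Laplace mechanism, and those later operations are post-processing that cannot increase it. I expect the write-up to be short; the only place requiring care is making the ``neighboring input'' notion and the data-dependent support issue precise.
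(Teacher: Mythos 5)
Your proposal is correct and takes essentially the same route as the paper: both arguments reduce to the observation that every clipped output has at most $K$ nonzero coordinates, each bounded in magnitude by $C$, so the $\ell_1$ distance between the outputs for any two inputs is at most $2KC$. Your write-up is in fact slightly more carefully phrased than the paper's (which expresses the bound as a maximum over coordinate pairs within a single vector rather than over neighboring inputs), but the underlying idea is identical.
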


\begin{proof}
\normalfont
Given the clipping constant $C$, each element can be bounded by $0 \le \hat{v}^g_i \le C$. Since at most $K$ elements are greater than 0, the sensitivity $\Delta$ of $\bm{\hat{v}}^g$ can be bounded by
\begin{equation}
    \Delta = \max \limits_{i,j \in [1,N]} \{\Vert \hat{v}^g_i - \hat{v}^g_j \Vert_1 \}  \le 2K \vert \max\limits_{i \in [1,N]} \{\hat{v}^g_i\} - \min\limits_{j \in [1,N]} \{\hat{v}^g_j\} \vert  = 2KC
\end{equation}
\end{proof}

In the randomized-response phase, we randomly flip the activating state for input. Briefly, the activating states of $\hat{\bm{v}}^g$ after randomized response are denoted by $\bm{s} = \{s_1, s_1, \ldots , s_N\}$. The binary variable $s_i$ indicates activating by 1 and deactivating by 0. Then we have
\begin{corollary}
The activating state in R\textsuperscript{3}eLU-forward is $\epsilon_p$-DP.
\label{cor:step2}
\end{corollary}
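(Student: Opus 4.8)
The plan is to show that the (randomized) map $\hat{\bm v}^g \mapsto \bm s$ is $\epsilon_p$-DP by first bounding the likelihood ratio contributed by a single coordinate and then invoking basic composition over the coordinates that can actually differ between neighboring inputs.

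First I would invoke Corollary~\ref{cor:step1} together with the $ClipK$ preprocessing: after clipping, $0 \le \hat v_i^g \le C$ for every $i$ and at most $K$ coordinates are nonzero, so $\hat v_i^g / \Vert \hat{\bm v}^g \Vert_\infty \in [0,1]$. Plugging this into the definition of $p_i$ pins it to the interval $[\tfrac12,\, q]$ with $q := e^{\epsilon_p/K}/(1+e^{\epsilon_p/K})$, and correspondingly $1-p_i \in [1-q,\, \tfrac12]$. This is the only place the clipping/sensitivity result is used, but it is essential: it is what makes the randomized-response odds bounded.

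Next, fix one coordinate $i$ and two admissible values $\hat v_i, \hat v_i'$ (those that can arise from neighboring inputs), with induced activation probabilities $p_i, p_i'$. For outcome $s_i = 1$, the ratio $p_i/p_i'$ is largest when $p_i = q$ and $p_i' = \tfrac12$, giving $2q = 2e^{\epsilon_p/K}/(1+e^{\epsilon_p/K}) \le e^{\epsilon_p/K}$, which holds because $e^{\epsilon_p/K} \ge 1$. For outcome $s_i = 0$, the ratio $(1-p_i)/(1-p_i')$ is largest when $1-p_i = \tfrac12$ and $1-p_i' = 1-q$, giving $(1+e^{\epsilon_p/K})/2 \le e^{\epsilon_p/K}$, again because $e^{\epsilon_p/K} \ge 1$. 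Hence each coordinate's randomized response is $(\epsilon_p/K)$-DP. Finally, coordinates on which two neighboring inputs agree contribute likelihood ratio $1$, and the neighboring relation underlying the $2KC$ sensitivity bound perturbs only the $K$ retained coordinates, so by the sequential (basic) composition theorem of differential privacy the full state vector $\bm s$ is $K \cdot (\epsilon_p/K) = \epsilon_p$-DP.

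The main obstacle is the coupling introduced by the normalizer $\Vert \hat{\bm v}^g \Vert_\infty$ in the formula for $p_i$: it means the per-coordinate responses are not literally independent across neighboring inputs, so the composition step must be argued carefully — either by treating $\hat v_i / \Vert \hat{\bm v}^g \Vert_\infty$ as the effective query with range exactly $[0,1]$ and bounding its contribution uniformly, or by bounding the joint likelihood ratio directly over the $\le K$ differing coordinates. Of the two elementary inequalities above, the $s_i = 0$ branch is the tighter one, and it is precisely where the per-coordinate budget $\epsilon_p/K$ is consumed exactly.
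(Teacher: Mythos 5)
Your proposal is correct and follows essentially the same route as the paper's proof: the non-retained coordinates have $p_i=\tfrac12$ and cancel, each retained coordinate's randomized response has likelihood ratio at most $e^{\epsilon_p/K}$ (the paper uses the single worst-case odds bound $\frac{q}{1-q}=e^{\epsilon_p/K}$ with $q=\frac{e^{\epsilon_p/K}}{1+e^{\epsilon_p/K}}$, where your two branch-wise bounds $2q$ and $\frac{1+e^{\epsilon_p/K}}{2}$ are slightly tighter), and the product over $K$ coordinates gives $e^{\epsilon_p}$. The one step to be careful about is your assertion that neighboring inputs differ in only the $K$ retained coordinates: since $ClipK$ may select different top-$K$ index sets $\bm{h}_1\neq\bm{h}_2$ for the two inputs, up to $2K$ coordinates can carry different $p_i$, and naive per-coordinate composition over all of them would only yield $2\epsilon_p$. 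The paper avoids this by bounding the joint ratio globally (numerator at most $q^K$ against denominator at least $(1-q)^K$, each a product of exactly $K$ non-trivial factors); your version can be repaired by pairing each $i\in\bm{h}_1\setminus\bm{h}_2$ with a $j\in\bm{h}_2\setminus\bm{h}_1$, whose combined worst-case contribution $2q\cdot\frac{1/2}{1-q}=e^{\epsilon_p/K}$ again charges only one unit of $\epsilon_p/K$ per pair.
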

\begin{proof}
\normalfont
Supposing the state of $\hat{\bm{v}}^g$ is $\bm{s}_0$, containing $K$ 1s, and $\bm{h}$ contains indexes of top-K elements, then the probability of observing an activation state $s$ for a given original activation states $\bm{s}_0$ should be
\begin{equation}
    \Pr[\bm{s} \vert \bm{s}_0] = \frac{1}{2^{N-K}} \prod \limits_{i\in \bm{h}} p_i^{s_i} \cdot (1-p_i)^{1-s_i}.
\end{equation}
Now, we consider any two arbitrary states $\bm{s}_1$ and $\bm{s}_2$. The difference in activation states can be bounded by
\begin{equation}
\begin{split}
    \frac{\Pr[\bm{s} \vert \bm{s}_1]}{\Pr[\bm{s} \vert \bm{s}_2]} & = 
    \frac{\frac{1}{2^{N-K}} \prod \limits_{i\in \bm{h_1}} p_i^{s_i} \cdot (1-p_i)^{1-s_i}}{\frac{1}{2^{N-K}} \prod \limits_{i\in \bm{h_2}} p_i^{s_i} \cdot (1-p_i)^{1-s_i}} \\
    & = (\frac{\max\limits_{i\in \bm{h_1}}\{p_i\}}{1-\max\limits_{i\in \bm{h_2}}\{p_i\}})^K  = (\frac{e^{\frac{\epsilon_p}{K}} \frac{1}{1 + e^{\epsilon_p/K}}}{\frac{1}{1 + e^{\epsilon_p/K}}})^K  = e^{\epsilon_p}
\end{split}
\end{equation}
\end{proof}

\begin{corollary}
Given privacy budgets $\epsilon_p$ and $\epsilon_l$ for randomized response and Laplace mechanism respectively, the output of R\textsuperscript{3}eLU-forward procedure is ($\epsilon_p + \epsilon_l$)-DP.
\label{cor:re3lu_for_single}
\end{corollary}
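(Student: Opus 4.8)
The plan is to view the R\textsuperscript{3}eLU-forward mechanism as the adaptive composition of two sub-mechanisms and then invoke the sequential composition theorem of differential privacy \cite{dwork2014algorithmic}. Concretely, I would write the output $\tilde{\bm{a}}^g$ as a pair $(\bm{s}, \tilde{\bm{v}})$, where $\bm{s}\in\{0,1\}^N$ is the vector of activation states produced by the randomized-response step (line 4 of Algorithm~\ref{alg:re3lu}) and $\tilde{\bm{v}}$ collects the noisy clipped values $\max(\hat{v}^g_i + \texttt{Lap}(0,\tfrac{2KC}{\epsilon_l}),0)$ on the coordinates with $s_i=1$, while the coordinates with $s_i=0$ are deterministically set to $0$. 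This split is faithful to the algorithm because noise is injected only on activated coordinates, so $\tilde{\bm{v}}$ genuinely depends on the first-stage output $\bm{s}$; i.e. the composition is adaptive, which is exactly what the sequential composition theorem accommodates. Since the released $\tilde{\bm{a}}^g$ is a deterministic function of $(\bm{s},\tilde{\bm{v}})$, it suffices to prove the $(\epsilon_p+\epsilon_l)$-DP bound for the (over-released) pair and then appeal to post-processing.

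For the first stage, Corollary~\ref{cor:step2} already gives that the map producing $\bm{s}$ is $\epsilon_p$-DP. For the second stage, I would fix an arbitrary activation pattern $\bm{s}$; the quantity effectively released is the vector of clipped values $\hat{\bm{v}}^g$ restricted to the active coordinates, perturbed by independent $\texttt{Lap}(0,\tfrac{2KC}{\epsilon_l})$ noise. By Corollary~\ref{cor:step1} the $l_1$-sensitivity of $\hat{\bm{v}}^g$ (hence of any sub-vector of it) is at most $2KC$, so the Laplace mechanism with scale $2KC/\epsilon_l$ is $\epsilon_l$-DP; the final $\max(\cdot,0)$ clamp and the hard-zeroing of the non-top-$K$ coordinates are data-independent post-processing and leave the guarantee intact. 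Then, for neighbouring inputs $\bm{v}_1,\bm{v}_2$ and any output $(\bm{s},\tilde{\bm{v}})$, I would factor the likelihood ratio as
\begin{equation*}
\frac{\Pr[\bm{s}\mid \bm{v}_1]\,\Pr[\tilde{\bm{v}}\mid \bm{s},\bm{v}_1]}{\Pr[\bm{s}\mid \bm{v}_2]\,\Pr[\tilde{\bm{v}}\mid \bm{s},\bm{v}_2]} \le e^{\epsilon_p}\cdot e^{\epsilon_l} = e^{\epsilon_p+\epsilon_l},
\end{equation*}
and conclude $(\epsilon_p+\epsilon_l)$-DP after the usual integration over measurable output sets.

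The step I expect to need the most care is the second-stage argument conditional on $\bm{s}$: one has to check that $\|\hat{\bm{v}}^g(\bm{v}_1)-\hat{\bm{v}}^g(\bm{v}_2)\|_1\le 2KC$ is legitimate \emph{uniformly} over the data-dependent choice of top-$K$ coordinates, so that the per-coordinate Laplace densities multiply to at most $e^{\epsilon_l}$ no matter which pattern $\bm{s}$ is observed; this is where Corollary~\ref{cor:step1} does the real work, since it already absorbs the instability of the $ClipK$ selection into the sensitivity bound. A secondary point worth stating explicitly is that the randomized-response step and the Laplace step draw on disjoint privacy budgets $\epsilon_p$ and $\epsilon_l$, so the additive total $\epsilon_p+\epsilon_l$ is precisely the combined cost, with no double counting and no need for a more pessimistic composition bound.
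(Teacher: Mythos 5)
Your proposal is correct and follows essentially the same route as the paper: the paper likewise factors the output likelihood as $\Pr[\tilde{\bm{a}}^g\mid\hat{\bm{v}},\bm{s}]\cdot\Pr[\bm{s}\mid\bm{s}_{\bm{a}}]$, bounds the first factor by $e^{\epsilon_l}$ via the sensitivity bound $\Delta\le 2KC$ from Corollary~\ref{cor:step1} and the second by $e^{\epsilon_p}$ via Corollary~\ref{cor:step2}, and multiplies. Your additional remarks on post-processing and on the uniformity of the sensitivity bound over the data-dependent top-$K$ selection make explicit what the paper leaves implicit, but they do not change the argument.
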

\begin{proof}
\normalfont
Given activation state $\bm{s}$ and sensitivity $\Delta \le 2KC$,
\begin{equation}
    \frac{\Pr[\bm{\tilde{a}}^g \vert \bm{\hat{v}}, \bm{s}]}{\Pr[\bm{\tilde{a}}^g \vert \bm{\hat{v}}^{\prime}, \bm{s}]}  \le e^{\frac{\Delta}{\frac{2KC}{\epsilon_l}}} 
    \le e^{\epsilon_l}.
\end{equation}
Then, the difference of R\textsuperscript{3}eLU-forward outputs for any arbitrary $\bm{v}$ and $\bm{v}^{\prime}$ can be bounded by
\begin{equation}
\begin{split}
    \frac{\Pr[\bm{\tilde{a}}^g \vert \bm{v}]}{\Pr[\bm{\tilde{a}}^g \vert \bm{v}^{\prime}]} & = 
    \frac{\Pr[\bm{\tilde{a}}^g \vert \bm{\hat{v}}, \bm{s}] \cdot \Pr[\bm{s}\vert \bm{s}_{\bm{a}}]}{\Pr[\bm{\tilde{a}}^g \vert \bm{\hat{v}}^{\prime}, \bm{s}] \cdot \Pr[\bm{s}\vert \bm{s}_{\bm{a}^{\prime}}]} \\
    & \le e^{\epsilon_l} \cdot e^{\epsilon_p} \\
    & = e^{\epsilon_l + \epsilon_p}
\end{split}
\end{equation}
\end{proof}

Please note that $\epsilon = \epsilon_l + \epsilon_p$ is for the whole training dataset. By following the privacy amplification theory, each training step is $\gamma \epsilon$-DP, where $\gamma$ is the sampling ratio of a data batch. Now we can give the total privacy budget of the entire training process for the guest using the strong composition theorem.
\begin{corollary}
Split learning for the guest with R\textsuperscript{3}eLU-forward achieves $(\epsilon_g, \delta_g)$-DP , where $\epsilon_g = \gamma \epsilon \sqrt{2T\ln(\frac{1}{\delta_g})} + \gamma \epsilon T(e^{\gamma \epsilon}-1)$.
\end{corollary}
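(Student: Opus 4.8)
The statement is a straightforward end-to-end privacy accounting, and the plan is to chain three ingredients, two of which are already established. First I would fix the per-release cost of one invocation of R\textsuperscript{3}eLU-forward using Corollary~\ref{cor:re3lu_for_single}; then I would apply privacy amplification by subsampling to drop from the whole-dataset budget $\epsilon=\epsilon_p+\epsilon_l$ to a per-iteration budget $\gamma\epsilon$, where $\gamma$ is the minibatch sampling ratio; and finally I would feed the $T$ iterations into the advanced (strong) composition theorem to obtain the $(\epsilon_g,\delta_g)$ guarantee in closed form.

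In more detail: by Corollary~\ref{cor:re3lu_for_single}, with $C$, $K$, $\epsilon_p$, $\epsilon_l$ fixed in advance, a single execution of R\textsuperscript{3}eLU-forward on the guest's data is $\epsilon$-DP for neighbouring datasets over the whole training set. In each SGD step the only quantity the guest hands to the host is $\tilde{\bm a}^g$, and every downstream host-side computation (the rest of the forward pass, the loss, the backward pass, the update) is post-processing, so it is enough to account for the release of $\tilde{\bm a}^g$ once per step. Since each step actually runs R\textsuperscript{3}eLU-forward on a batch sampled uniformly at random with ratio $\gamma$ rather than on all of $\bm X$, the subsampling amplification lemma makes the per-step mechanism $\gamma\epsilon$-DP (the first-order amplification bound invoked in the paragraph just before the statement). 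Viewing the run as an adaptive composition of $T$ mechanisms each $(\gamma\epsilon,0)$-DP and applying the strong composition theorem, for any $\delta_g\in(0,1)$ the whole process is $(\epsilon_g,\delta_g)$-DP with
\begin{equation}
\epsilon_g=\gamma\epsilon\sqrt{2T\ln(1/\delta_g)}+\gamma\epsilon\,T\,(e^{\gamma\epsilon}-1),
\end{equation}
matching the claim; a concluding post-processing step extends the bound to everything the guest ever reveals.

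The main obstacle I expect is handling adaptivity cleanly: the batch drawn and the activations released at iteration $t$ depend on the model, which has been shaped by all earlier releases, so the single-release and amplification steps must be used in their adaptive versions. This goes through because $C$, $K$, $\epsilon_p$, $\epsilon_l$ are chosen a priori — the sensitivity bound $2KC$ of Corollary~\ref{cor:step1}, and therefore Corollaries~\ref{cor:step2} and~\ref{cor:re3lu_for_single}, hold uniformly over the history — and the strong composition theorem is itself stated for adaptively selected mechanisms. The only other point to verify is that $\delta_g$ enters purely through composition: each step is pure $\gamma\epsilon$-DP with no per-step $\delta$, which is exactly why $\sqrt{2T\ln(1/\delta_g)}$ multiplies $\gamma\epsilon$ without an extra additive $\delta$-accumulation term.
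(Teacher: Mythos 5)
Your proposal follows exactly the route the paper takes (which it states only informally in the paragraph preceding the corollary): per-invocation $\epsilon$-DP from Corollary~\ref{cor:re3lu_for_single}, subsampling amplification to $\gamma\epsilon$ per step, and the advanced composition theorem over $T$ adaptive steps to obtain $\epsilon_g=\gamma\epsilon\sqrt{2T\ln(1/\delta_g)}+\gamma\epsilon T(e^{\gamma\epsilon}-1)$. Your added remarks on post-processing and adaptivity only make the same argument more careful, so there is nothing substantively different to compare.
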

Since we construct R\textsuperscript{3}eLU-forward and R\textsuperscript{3}eLU-backward using the same method, these two procedures have the same analysis result if we constrain that they use the same input. In this way, we can conclude that the output of R\textsuperscript{3}eLU-backward procedure is ($\epsilon_p + \epsilon_l$)-DP, where $\epsilon_p$ and $\epsilon_l$ are budgets for randomized response and Laplace mechanism, respectively.
\begin{corollary}
Split learning for the host with R\textsuperscript{3}eLU-backward achieves $(\epsilon_h, \delta_h)$-DP, where $\epsilon_h = \gamma \epsilon \sqrt{2T\ln(\frac{1}{\delta_h})} + \gamma \epsilon T(e^{\gamma \epsilon}-1)$.
\end{corollary}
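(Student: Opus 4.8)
The plan is to mirror the argument already established for the guest with R\textsuperscript{3}eLU-forward, exploiting the remark that R\textsuperscript{3}eLU-backward is built by exactly the same construction. Concretely, I would proceed in three stages: (i) show that a single invocation of R\textsuperscript{3}eLU-backward is $\epsilon$-DP over the whole dataset with $\epsilon=\epsilon_p+\epsilon_l$; (ii) invoke privacy amplification by subsampling to obtain $\gamma\epsilon$-DP per training step; (iii) apply the strong composition theorem across the $T$ iterations.

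For stage (i) the sub-steps are the backward analogues of Corollary~\ref{cor:step1}, Corollary~\ref{cor:step2}, and Corollary~\ref{cor:re3lu_for_single}. First, since $ClipK$ retains at most $K$ entries and clips each in $\ell_1$-norm by $C$, applied here to $|\bm{\delta}^g|$, the sensitivity of $\hat{|\bm{\delta}^g|}$ is bounded by $2KC$ exactly as in Corollary~\ref{cor:step1}; the $Sign$ factor is deterministic post-processing and does not change the $\ell_1$ distance bound between neighboring partial-loss vectors. Second, the randomized-response phase flips the retain/drop state of each of the top-$K$ coordinates with the probabilities $p_i$ defined for the backward procedure, so the worst-case ratio over two neighboring state vectors computed as in Corollary~\ref{cor:step2} again equals $e^{\epsilon_p}$, i.e.\ this phase is $\epsilon_p$-DP. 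Third, adding $\texttt{Lap}(0,\frac{2KC}{\epsilon_l})$ to each retained coordinate in Algorithm~\ref{alg:re3lu-diff} is the classical Laplace mechanism at sensitivity $2KC$, hence $\epsilon_l$-DP. Composing the randomized response with the Laplace noise as in Corollary~\ref{cor:re3lu_for_single} gives that the released $\tilde{\bm{\delta}}^g$ is $(\epsilon_p+\epsilon_l)$-DP with respect to the host's private data, the neighboring relation being taken on the host's inputs since $\bm{\delta}^g$ is precisely what propagates host information to the guest.

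For stages (ii) and (iii) I would reuse verbatim the reasoning that produced the guest's bound. Because each training iteration operates on a freshly subsampled mini-batch with sampling ratio $\gamma$, the privacy amplification theory turns the $\epsilon$-DP per-dataset guarantee into a $\gamma\epsilon$-DP per-step guarantee. Then the strong composition theorem applied to $T$ such $\gamma\epsilon$-DP mechanisms yields, for any $\delta_h>0$, that the entire backward training process is $(\epsilon_h,\delta_h)$-DP with $\epsilon_h=\gamma\epsilon\sqrt{2T\ln(\frac{1}{\delta_h})}+\gamma\epsilon T(e^{\gamma\epsilon}-1)$, which is the claimed expression.

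The point needing the most care is stage (i): verifying that the $Sign$ operation and the use of absolute values inside $ClipK$ neither inflate the sensitivity nor disturb the randomized-response accounting, and that the Laplace scale $\frac{2KC}{\epsilon_l}$ in Algorithm~\ref{alg:re3lu-diff} is matched to that sensitivity. A secondary subtlety is that the strong-composition bound presupposes a common per-step budget $\epsilon$; if one wishes to combine this with the dynamic or geometric budget allocation described earlier, the composition must instead be carried out with the corresponding sequence $\{\epsilon_i\}$, but for the uniform-budget statement of the corollary the above argument suffices.
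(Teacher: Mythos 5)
Your proposal is correct and follows essentially the same route as the paper, which proves this corollary only implicitly by remarking that R\textsuperscript{3}eLU-backward is constructed by the same method as R\textsuperscript{3}eLU-forward and therefore inherits the $(\epsilon_p+\epsilon_l)$-DP per-release guarantee, followed by the same privacy amplification and strong composition steps used for the guest. Your explicit checks that the $Sign$ operation and the absolute values in $ClipK$ do not inflate the $2KC$ sensitivity are details the paper leaves unstated, so your write-up is, if anything, slightly more careful than the original.
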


We remark that the dynamic privacy budget allocation improves budget utilization without any additional privacy loss. In other words, an important feature gets a higher probability of retaining the activation state than an insignificant feature, which leads to a larger privacy budget. But the total privacy budget of all features remains unchanged.


\begin{corollary}
When the guest runs R\textsuperscript{3}eLU-forward procedure with the dynamic privacy budget allocation, the output is still $\epsilon$-DP, $\epsilon=\epsilon_p + \epsilon_l$.
\label{cor:re3lu_for_dynamic}
\end{corollary}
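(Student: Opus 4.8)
The plan is to re-run the argument chain of Corollaries~\ref{cor:step1}--\ref{cor:re3lu_for_single} and observe that the dynamic allocation affects only the randomized-response probabilities while leaving every sensitivity-related quantity untouched. First I would note that $ClipK$ is applied to $\bm{v}^g$ exactly as in the uniform scheme, so the sensitivity bound $\Delta \le 2KC$ of Corollary~\ref{cor:step1} still holds; consequently the Laplace step with scale $\frac{2KC}{\epsilon_l}$ remains $\epsilon_l$-DP, verbatim as in the proof of Corollary~\ref{cor:re3lu_for_single}.

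Second, I would check that the randomized-response phase is still $\epsilon_p$-DP. The only change is that the activation probability now reads $p_i = \frac12 + \frac{U_i^q}{\Vert \bm{U}\Vert_\infty}\left(\frac{e^{\epsilon_p/K}}{1+e^{\epsilon_p/K}}-\frac12\right)$ instead of the $\hat v_i$-based expression. Since the importance vector $\bm{U}$ is nonnegative, $0 \le U_i^q/\Vert\bm{U}\Vert_\infty \le 1$, so every $p_i$ lies in $\left[\frac12,\ \frac{e^{\epsilon_p/K}}{1+e^{\epsilon_p/K}}\right]$ --- precisely the range exploited in the proof of Corollary~\ref{cor:step2}. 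Hence, for any two activation states $\bm{s}_1,\bm{s}_2$ differing on the top-$K$ coordinates, the likelihood ratio is still at most $\left(\frac{\max_i p_i}{1-\max_i p_i}\right)^K \le (e^{\epsilon_p/K})^K = e^{\epsilon_p}$. Composing this $\epsilon_p$-DP state release with the $\epsilon_l$-DP Laplace perturbation conditioned on the state --- the same product bound that closes the proof of Corollary~\ref{cor:re3lu_for_single} --- yields $(\epsilon_p+\epsilon_l)$-DP, i.e., $\epsilon$-DP, for the R\textsuperscript{3}eLU-forward output.

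The point that needs care --- and which I expect to be the main obstacle --- is that the weights $U_i^q$ are themselves data-dependent, being accumulated from the Taylor-expansion importances $\hat{I}_k = (\nabla_{\theta_k}\mathcal{L}\cdot\theta_k)^2$ of earlier iterations, so one must justify why choosing the $p_i$ from data does not spend extra budget. I would resolve this by treating $\bm{U}$ as fixed before the $q$-th epoch begins and arguing that it is a post-processing of quantities already released under the earlier epochs' accounting; by the post-processing invariance of differential privacy, conditioning the current epoch's mechanism on $\bm{U}$ costs nothing additional. The residual worry --- that redistributing the same $\epsilon_p$ unevenly (larger $p_i$ for more important features) might inflate the worst-case ratio --- is dispelled by the observation in the second step that the bound depends only on $\max_i p_i$, which is capped at $\frac{e^{\epsilon_p/K}}{1+e^{\epsilon_p/K}}$ regardless of how the $U_i^q$ are spread across features; the total budget over all features therefore stays exactly $\epsilon_p$, and the rest of the argument is a direct transcription of the uniform-budget corollaries.
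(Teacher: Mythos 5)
Your argument for the randomized-response half coincides with the paper's: both rest on the observation that the dynamic weights only move each $p_i$ within the fixed interval $\left[\frac{1}{2},\ \frac{e^{\epsilon_p/K}}{1+e^{\epsilon_p/K}}\right]$ (since $0\le U_i^q/\Vert\bm{U}\Vert_\infty\le 1$), so the worst-case likelihood ratio $\left(\frac{\max_i p_i}{1-\max_i p_i}\right)^K\le e^{\epsilon_p}$ from Corollary~\ref{cor:step2} is untouched. Where you genuinely diverge is the Laplace half. You keep the noise scale at the uniform $\frac{2KC}{\epsilon_l}$ of Algorithm~\ref{alg:re3lu} and declare that step $\epsilon_l$-DP verbatim; the paper's proof instead treats the Laplace budget itself as dynamically allocated, giving feature $i$ the share $\epsilon_l U_i/\sum_j U_j$ (hence a feature-dependent noise scale) and recovering $e^{\epsilon_l}$ by summing the per-feature exponents via basic composition. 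Both derivations are internally consistent; they simply describe two slightly different mechanisms, and since the paper's prose on dynamic allocation only restates the formula for $p_i$, your reading is defensible --- but be aware the published proof is written for the per-feature-noise variant, so your Laplace step is simpler (trivial) while the paper's buys noise that adapts to importance. Your final paragraph addresses something the paper silently skips: the $U_i^q$ are data-dependent. The post-processing framing you lead with is not quite the right tool --- the importances are accumulated from the guest's own raw gradients, not solely from already-privatized releases --- but your fallback observation, that the likelihood-ratio bound is a worst case over all probability vectors in the admissible range and therefore holds no matter how $\bm{U}$ is chosen, is the argument that actually closes that gap, and it is a more careful treatment than the paper provides.
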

\begin{proof}
\normalfont
Since activation values are clipped with constant $C$, $\max_{i\in \bm{h}}\{p_i\}$ will not be affected. Thus, the activation state with the dynamic privacy budget allocation is still $\epsilon_p$-DP. Given the activation state $\bm{s}$ and sensitivity $\Delta$, then
\begin{equation}
    \frac{\Pr[\bm{\tilde{a}}^g \vert \hat{\bm{v}}, \bm{s}]}{\Pr[\bm{\tilde{a}}^g \vert \hat{\bm{v}}^{\prime}, \bm{s}]}  \le e^{ \sum_{i=1}^{N}{(\frac{C \sum_{j=1}^{N}{U_j}}{\epsilon_l \Delta U_i})}
    } 
    \le e^{\sum_{i=1}^{N}{(\frac{\epsilon_l U_i}{\sum_{j=1}^{N}{U_j}}})}
    = e^{\epsilon_l}
\end{equation}
For any arbitrary $\bm{v}$ and $\bm{v}^{\prime}$, the difference of R\textsuperscript{3}eLU-forward outputs can be bounded by
\begin{equation}
\begin{split}
    \frac{\Pr[\bm{\tilde{a}}^g \vert \bm{v}]}{\Pr[\bm{\tilde{a}}^g \vert \bm{v}^{\prime}]} & = 
    \frac{\Pr[\bm{\tilde{a}}^g \vert \bm{\hat{v}}, \bm{s}] \cdot \Pr[\bm{s}\vert \bm{s_{a}}]}{\Pr[\bm{\tilde{a}}^g \vert \bm{\hat{v}}^{\prime}, \bm{s}] \cdot \Pr[\bm{s}\vert \bm{s}_{\bm{a}^{\prime}}]}  \le e^{\epsilon_l} \cdot e^{\epsilon_p} = e^{\epsilon_l + \epsilon_p}
\end{split}
\end{equation}
\end{proof}

\section{Evaluation}
We evaluate our privacy-preserving SplitNN solution from two aspects, model usability, and privacy loss. To be comprehensive, we will compare our solution with the baseline (without any protection) and the most relevant defensive solutions, i.e., a primitive Laplace mechanism \cite{dwork2014algorithmic} and DPSGD \cite{abadi2016deep}, the most well-known privacy-preserving deep learning solution, in the same setting. We will use the same fixed total privacy budget and the same split way (shown in the appendix) for all solutions. For the primitive Laplace mechanism, we simply add Laplacian noise to activations and partial losses to protect the privacy of the guest and host, respectively. For DPSGD, we add artificial noises to gradients of models on either side. For our solution, we set $\epsilon_p = \epsilon_l = \frac{\epsilon}{2}$ and $K$ as half number of features. We set $C$ as 10. We use the dynamic privacy budget allocation for features in our solution and set the initial importance as zero for all features.

All defensive solutions will be evaluated using three real-world datasets, MovieLens \cite{harper2015Movielens} and BookCrossing \cite{ziegler2005improving} for the recommendation, and MNIST \cite{lecun1998gradient} and CIFAR100 \cite{krizhevsky2009learning} for image classification. The \textit{MovieLens} 1-M dataset contains 1 million ratings of 4,000 movies collected from 6,000 users and users' demographic information such as gender and age. The \textit{BookCrossing} dataset includes 278,858 users' demographic information and 1,149,780 ratings of 271,379 books. The \textit{MNIST} database has 70,000 handwriting image examples of digital numbers from 0 to 9. The \textit{CIFAR100} database has 60,000 image examples for 100 classes. Each image example has one superclass as its rough label and one class as its accurate label. We will use batch size 32, a learning rate of 0.01, and an Adam optimizer as default. Since different datasets and defensive solutions may require various epochs for split learning, we will compare the metrics when the learning converges or the privacy budget is drained. All experimental results are averaged across multiple runs.

Before the evaluation, we verify the feasibility of our dynamic importance estimation method. During the verification, we observe the importance estimation of the neurons in the cut layer. The importance estimation of each neuron is calculated as Eq. (\ref{equ:9}). By accumulating all intermediate results of the importance estimated for neurons, we find that the final importance is almost the same as obtained by the original estimation method on a well-trained model's final state. The importance estimation results of neurons are shown in Figure~\ref{fig:importance_compare}, proving the correctness of our dynamic importance estimation method and the existence of unbalanced feature importance. As can be seen, the similarity between the accumulated (dynamic) estimation and the original (stable) estimation indicates that our accumulated approach to estimating the importance of a neuron works as well as the original approach.
\begin{figure}[ht]
    \centering
    \includegraphics[scale=0.35]{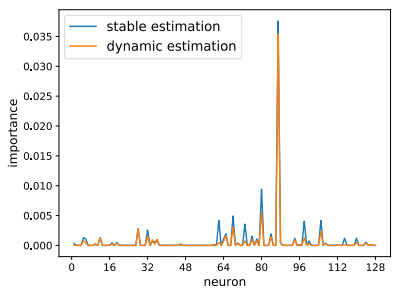}
    \caption{Estimation results of neuron importance.}
    \label{fig:importance_compare}
\end{figure}


\begin{table}[ht]
\centering
\caption{Model usability results while preserving the privacy of the guest.}
\label{tab:guest_usability}
\scriptsize
\begin{tabular}{c|ccc|ccc|ccc|ccc}
\hline
\multirow{2}{*}{$\epsilon$} & \multicolumn{3}{c|}{MovieLens} & \multicolumn{3}{c|}{BookCrossing} & \multicolumn{3}{c|}{MNIST} & \multicolumn{3}{c}{CIFAR100} \\ \cline{2-13} 
                            & Laplace    & DPSGD    & \textbf{Ours}   & Laplace     & DPSGD     & \textbf{Ours}    & Laplace  & DPSGD  & \textbf{Ours}  & Laplace    & DPSGD & \textbf{Ours}  \\ \hline
0.1                         &30.84\%            &32.29\%          &\textbf{34.03\%}        &57.02\%             &55.89\%           &\textbf{58.18\%}         &17.43\%          &30.21\%        &\textbf{32.41\%}       & 20.25\% & 34.21\% & \textbf{37.87\%}\\
0.5                         &41.25\%            &43.69\%          &\textbf{43.87\%}        &57.67\%             &56.14\%           &\textbf{58.54\%}         &27.33\%          &58.43\%        &\textbf{60.38\%}    & 39.74\% & 51.36\% & \textbf{58.22\%}   \\
1.0                         &48.16\%            &49.09\%          &\textbf{50.56\%}        &58.02\%             &56.56\%           &\textbf{58.42\%}         &31.05\%          &75.58\%        &\textbf{76.60\%}     & 46.19\% & 60.48\% &  \textbf{65.32\%} \\
2.0                         &49.32\%            &50.38\%          &\textbf{50.49\%}        &58.74\%             &56.91\%           &\textbf{59.24\%}         &38.92\%          &92.90\%        &\textbf{93.53\%}   & 56.30\% & 69.72\% &  \textbf{73.35\%}  \\
4.0                         &49.26\%            &\textbf{50.86\%}          &50.73\%        &59.01\%             &57.16\%           &\textbf{59.26\%}         &95.37\%          &\textbf{95.87\%}        &94.12\%   & 57.04\% & 70.86\% &  \textbf{74.41\%}  \\ \hline
\end{tabular}

\end{table}

\begin{table}[ht]
\centering
\caption{Model usability results while preserving the privacy of the host.}
\label{tab:host_usability}
\scriptsize

\begin{tabular}{c|ccc|ccc|ccc|ccc}
\hline
\multirow{2}{*}{$\epsilon$} & \multicolumn{3}{c|}{MovieLens} & \multicolumn{3}{c|}{BookCrossing} & \multicolumn{3}{c|}{MNIST} & \multicolumn{3}{c}{CIFAR100} \\ \cline{2-13} 
                            & Laplace    & DPSGD    & \textbf{Ours}   & Laplace     & DPSGD     & \textbf{Ours}    & Laplace  & DPSGD  & \textbf{Ours}  & Laplace    & DPSGD & \textbf{Ours} \\ \hline
0.1                         &31.47\%            &30.68\%          &\textbf{33.98\%}        &57.37\%             &57.46\%           &\textbf{58.26\%}         &27.64\%          &\textbf{33.45\%}        &32.36\%    & 17.04\% & 34.22\% & \textbf{38.96\%}  \\
0.5                         &41.75\%            &42.31\%          &\textbf{42.67\%}        &58.62\%             &58.24\%           &\textbf{58.59\%}         &55.38\%          &65.28\%        &\textbf{67.83\%}    & 25.82\% & 43.96\% & \textbf{53.72\%}   \\
1.0                         &47.43\%            &48.29\%          &\textbf{50.39\%}        &59.49\%             &58.44\%           &\textbf{59.77\%}         &71.95\%          &\textbf{89.74\%}        &88.14\%    & 37.69\% & 55.28\% & \textbf{61.48\%}   \\
2.0                         &49.86\%            &50.43\%          &\textbf{51.47\%}        &59.34\%             &59.97\%           &\textbf{60.27\%}         &89.15\%          &\textbf{92.66\%}        &92.52\%     & 51.87\% & 65.67\% & \textbf{69.60\%}  \\
4.0                         &49.57\%            &50.09\%          &\textbf{51.62\%}        &59.55\%             &\textbf{60.75\%}           &60.66\%         &94.61\%          &\textbf{95.37\%}        &95.01\%    & 51.87\%  & 66.89\% & \textbf{70.70\%}  \\ \hline
\end{tabular}
\end{table}

\subsection{Model Usability}
Since artificial perturbation may affect the learning procedure, we evaluate how SplitNN is affected by privacy-preserving solutions. Two asymmetric parties of SplitNN may have different influences on learning. Thus, we will evaluate model usability concerning privacy from the perspective of the guest or the host, respectively. We use an averaged test accuracy across all test samples for the evaluation of model usability. Precisely, the test accuracy of a recommendation model is calculated using a top-10 hit ratio, while the test accuracy of an image classifier is its prediction accuracy. In Table~\ref{tab:guest_usability} and Table~\ref{tab:host_usability}, we show the model usability results regarding various privacy budget values of the two parties. We note that model accuracy baselines of Movielens, BookCrossing, MNIST and CIFAR100 for SplitNN are 56.62\%, 61.70\%, 98.00\% and 76.20\%, respectively.

For the MovieLens model, our solution achieves the best model usability in most cases, especially with a smaller privacy budget. DPSGD has a better result when $\epsilon=4$ for the guest. But a significant privacy leakage will be caused in this case. For the BookCrossing model, the model usability of our solution is relatively high in cases of protecting the guest and the host. Similarly, DPSGD achieves a better result when $\epsilon=4$ by sacrificing the host's privacy. Results show some differences for the MNIST model. DPSGD has better results when protecting the host's privacy. The reason is that split learning for an image classification model segments image samples roughly, making our dynamic budget allocation approach malfunction. Meanwhile, DPSGD is not designed to protect partial loss in SplitNN, leading to an optimistic estimation of the threat against the host. On the contrary, our solution has a competitive performance in image classification. For the CIFAR100 model, our method outperforms other protection mechanisms. As the host has the major part of the model, the accuracy drops 5.5\% for host protection while only 1.79\% for guest protection. These results show that our method with dynamic privacy budget allocation can allocate appropriate privacy budget on different neurons and achieve high accuracy even on complex datasets and models while the primitive Laplace mechanism suffers a 24-percentage points drop in accuracy for CIFAR100 evaluation due to its indiscrimination on all neurons, as shown in Table~\ref{tab:host_usability}.

\subsection{Privacy Preservation}
We evaluate the performance of privacy preservation by comparing attack results against SplitNN with and without the defense. We will mount property inference and data reconstruction attacks against the guest and the host, respectively. The prediction accuracy of the adversary's inference model will be used to measure the performance of the property inference attack. As for the data reconstruction attack, the adversary tries to generate data samples as similar as possible to the target's private data. In this case, a mean squared error (MSE) between a generated sample and a target data sample is commonly used for the adversarial effect measurement.

\begin{table}[ht]
\centering
\caption{Results of defending the guest against property inference attack.}
\label{tab:guest_defense_inference}
\scriptsize

\begin{tabular}{c|ccc|ccc|ccc|ccc}
\hline
\multirow{2}{*}{$\epsilon$} & \multicolumn{3}{c|}{MovieLens} & \multicolumn{3}{c|}{BookCrossing} & \multicolumn{3}{c}{MNIST} & \multicolumn{3}{c}{CIFAR100}\\ \cline{2-13} 
                            & Laplace    & DPSGD    & \textbf{Ours}   & Laplace     & DPSGD     & \textbf{Ours}    & Laplace  & DPSGD  & \textbf{Ours}   & Laplace  & DPSGD  & \textbf{Ours}\\ \hline
0.1                         &66.99\%            &77.71\%          &\textbf{60.99\%}        &\textbf{54.76\%}             &73.29\%           &55.78\%         &\textbf{43.27\%}          &53.95\%        &44.33\%       &\textbf{50.76\%}            &79.14\%          &53.97\%\\
0.5                         &66.16\%            &74.23\%          &\textbf{64.16\%}        &\textbf{54.97\%}             &74.52\%           &56.33\%         &46.92\%          &54.23\%        &\textbf{45.59\%}       &\textbf{51.47\%}            &79.26\%          &55.13\%\\
1.0                         &\textbf{67.19\%}            &78.65\%          &68.18\%        &\textbf{55.03\%}             &74.96\%           &58.65\%         &47.58\%          &54.26\%        &\textbf{47.51\%}       &\textbf{50.40\%}            &79.37\%          &55.72\%\\
2.0                         &68.65\%            &73.06\%          &\textbf{68.56\%}        &\textbf{54.85\%}             &74.26\%           &58.14\%         &\textbf{48.06\%}          &54.65\%        &52.87\%       &60.76\%            &79.37\%          &\textbf{58.81\%}\\
4.0                         &\textbf{69.14\%}            &76.18\%          &71.91\%        &\textbf{54.92\%}             &74.33\%           &60.76\%         &\textbf{48.47\%}          &54.57\%        &55.73\%       &60.81\%            &79.35\%          &\textbf{58.03\%}\\ \hline
\end{tabular}
\end{table}

\begin{table}[ht]
\centering
\caption{Results of defending the host against property inference attack.}
\label{tab:host_defense_inference}
\scriptsize

\begin{tabular}{c|ccc|ccc|ccc|ccc}
\hline
\multirow{2}{*}{$\epsilon$} & \multicolumn{3}{c|}{MovieLens} & \multicolumn{3}{c|}{BookCrossing} & \multicolumn{3}{c}{MNIST} & \multicolumn{3}{c}{CIFAR100}\\ \cline{2-13} 
                            & Laplace    & DPSGD    & \textbf{Ours}   & Laplace     & DPSGD     & \textbf{Ours}    & Laplace  & DPSGD  & \textbf{Ours}  & Laplace  & DPSGD  & \textbf{Ours}\\ \hline
0.1                         &  53.46\%          &  78.59\%        &\textbf{51.86\%}    &  \textbf{54.55\%}      &      74.35\%    & 59.42\%     &60.34\%         &80.29\%         &\textbf{48.74\%}       &50.42\%             &51.46\%           &\textbf{41.89\%}\\
0.5                         &   53.46\%     &    75.64\%      &\textbf{51.89\%}       &        \textbf{54.62\%}     &       74.36\%    &  59.42\%   & 59.82\%         &81.92\%         &\textbf{49.71\%}     &50.38\%             &52.23\%           &\textbf{44.26\%} \\
1.0                         &   53.46\%    &    73.54\%  &  \textbf{52.75\%}      &        \textbf{54.95\%}    &  74.39\%     &      59.52\%   & 59.74\%         &82.80\%       &\textbf{50.48\%}        &49.95\%             &51.95\%           &\textbf{44.78\%}\\
2.0                         &   \textbf{53.47\%}  &       75.05\% &   59.77\%     &       \textbf{54.40\%}     &      74.39\%    &    58.13\%    & 60.38\%         & 88.88\%        & \textbf{50.57\%}       &50.77\%             &51.67\%           &\textbf{50.16\%}\\ 
4.0                     & \textbf{53.48\%}   &     79.28\% &    56.52\%    &     \textbf{54.95\%}     &  74.39\%     &    62.04\%   & 60.62\%        &89.73\%        &\textbf{50.47\%}       &51.52\%             &51.76\%           &\textbf{51.27\%}\\ \hline
\end{tabular}
\end{table}

\subsubsection{Defense against property inference attack}
A property inference attack is to infer an existing property (or attribute) of data samples. For example, an adversarial host in our experiments infers the age attribute of the guest's data for a recommendation model. However, the host has no idea of the age distribution since training data is vertically partitioned. We carry out the same property infence attack as \cite{luo2021feature}. We give evaluation results of the defensive effect of the guest and the host in Table~\ref{tab:guest_defense_inference} and Table~\ref{tab:host_defense_inference}, respectively. We use the prediction accuracy of the adversary's inference model as a criterion for evaluation. The higher the prediction accuracy, the more probable success the property inference attack may achieve. In other words, the worse the defensive effect is. As for an image classification model, an unknown patch of image samples will be inferred. The attack accuracy against baselines of MovieLens, BookCrossing, MNIST and CIFAR100 models can achieve above 80\%, 79\%, 94\% and 87\% by an adversarial host, 80\%, 78\%, 57\% and 53\% by an adversarial guest, respectively. However, our solution can effectively mitigate the adversarial effect during training and decrease the attack accuracy significantly. It should be noted that the primitive Laplace mechanism frustrates the inference attack because the artificial noise added by the Laplace mechanism is indiscriminate, leading to conspicuous damage to the model's usability. Even so, our solution has significant advantages on MovieLens and MNIST datasets. In contrast, the primitive Laplace mechanism cannot protect image classification models, while DPSGD cannot defeat the attack. On the BookCrossing dataset, the primitive Laplace mechanism seems to have a better performance. We infer that the simplicity of the BookCrossing dataset and its corresponding model may lead to this situation. As the noise generated by the primitive Laplace mechanism is haphazard, the model may fail to learn this certain property that the property inference attack aims for. As a result, the property inference attack performs worse on the primitive Laplace mechanism while it maintains a good level of accuracy due to its simplicity. For the CIFAR100 model, While protecting the guest, it seems that at a low privacy budget, the primitive Laplace is better. However, we remind that the model usability is extremely low when applying the primitive Laplace mechanism. Under other situations, our method can effectively decrease the effect of a property inference attack, especially at a low privacy budget.

\begin{table}[ht]
\centering
\caption{Results of defending the guest against data reconstruction attack.}
\label{tab:guest_defense_reconstruction}
\scriptsize

\begin{tabular}{c|ccc|ccc|ccc|ccc}
\hline
\multirow{2}{*}{$\epsilon$} & \multicolumn{3}{c|}{MovieLens} & \multicolumn{3}{c|}{BookCrossing} & \multicolumn{3}{c}{MNIST} & \multicolumn{3}{c}{CIFAR100}\\ \cline{2-13} 
                            & Laplace    & DPSGD    & \textbf{Ours}   & Laplace     & DPSGD     & \textbf{Ours}    & Laplace  & DPSGD  & \textbf{Ours} & Laplace  & DPSGD  & \textbf{Ours} \\ \hline
0.1                         &  0.2459          &  0.2455        &  \textbf{0.3223}      &   0.3216          &        0.2907   &    \textbf{0.3329}    &   1.8849       &    1.8885    & \textbf{2.0181}  &\textbf{12.5145}            &2.8622          &3.6983\\
0.5                         &   0.2453         &  0.2451        &  \textbf{0.3222}     &   0.3202      &  0.2902    &  \textbf{0.3329}       &  1.8024        &       1.8137 &    \textbf{1.9875}    &\textbf{12.5262}            &2.8537          &3.6891\\
1.0                         &   0.2453         &   0.2451       &  \textbf{0.3222}     & 0.3202     &    0.2902   &    \textbf{0.3221}     &    1.7857    &   1.7509     &   \textbf{1.9533}  &3.6419             &2.8351          &\textbf{3.6624}\\
2.0                         &   0.2452        &  0.2451    & \textbf{0.3222}       &  0.3202      &   0.2902     &       \textbf{0.3221}  &    1.7336      &     1.7469  &  \textbf{1.9391}     &2.9453            &2.7998           &\textbf{3.6383}\\
4.0                         &   0.2452         &   0.2451       &    \textbf{0.3222}    &   0.3202     &     0.2902   &   \textbf{0.3221}    &   1.7014       &   1.7440     & \textbf{1.9206}   &2.9502           &2.7743          &\textbf{3.6365}\\ \hline
\end{tabular}
\end{table}

\begin{table}[ht]
\centering
\caption{Results of defending the host against data reconstruction attack.}
\label{tab:host_defense_reconstruction}
\scriptsize

\begin{tabular}{c|ccc|ccc|ccc|ccc}
\hline
\multirow{2}{*}{$\epsilon$} & \multicolumn{3}{c|}{MovieLens} & \multicolumn{3}{c|}{BookCrossing} & \multicolumn{3}{c}{MNIST} & \multicolumn{3}{c}{CIFAR100}\\ \cline{2-13} 
                            & Laplace    & DPSGD    & \textbf{Ours}   & Laplace     & DPSGD     & \textbf{Ours}    & Laplace  & DPSGD  & \textbf{Ours}  & Laplace  & DPSGD  & \textbf{Ours}\\ \hline
0.1                      &  0.4032    &    0.2417    &     \textbf{0.5486}   & 0.4237      &  0.2758     &  \textbf{0.5066}   &   1.2887      & 1.0875    &  \textbf{1.8257}     &\textbf{13.2849}             &6.0999           &6.5283\\
0.5                         &    0.4024         & 0.2419         &  \textbf{0.5357}      &    0.4222         &    0.2756       &   \textbf{0.5149}      &       1.2778   &   1.0685     &   \textbf{1.7758}    &\textbf{12.8057}             &5.9302           &6.3719\\
1.0                         &    0.4008     &    0.2422      &   \textbf{0.5285}     &    0.4217         &    0.2743       &   \textbf{0.5235}      &  1.2602        &  1.0422      & \textbf{1.7528} &\textbf{12.7936}             &5.9283           &6.3531\\
2.0                         &   0.3982         &  0.2421        &  \textbf{0.5083}      & 0.4214      &   0.2697      &      \textbf{0.5150}   &    1.2613     &    1.0333    &     \textbf{1.7334}  &6.0397                       &5.9256           &\textbf{6.3453}\\
4.0                         &   0.3960         &  0.2422        &   \textbf{0.4819}     &     0.4194        &    0.2683      &    \textbf{0.5046}    &  1.2549        &  0.9996      &   \textbf{1.7262}    &6.0143                       &5.9247           &\textbf{6.3396}\\ \hline
\end{tabular}
\end{table}

\subsubsection{Defense against data reconstruction attack}
We take advantage of GANs and use the same reconstruction attack as \cite{hitaj2017deep}. We show the defense results against an adversarial host and an adversarial guest in Table~\ref{tab:guest_defense_reconstruction} and Table~\ref{tab:host_defense_reconstruction}, respectively. We note that the MSE is measured after the attack model has been trained sufficiently in all cases. The MSEs measured for the attack against baselines of MovieLens, BookCrossing, MNIST and CIFAR100 models are 0.2412, 0.2629, 0.9612 and 2.6335 by an adversarial host, 0.2369, 0.2402, 1.6998 and 5.7534 by an adversarial guest, respectively. Please note that these attack results against the baselines are frustrating because the reconstruction attack is hard to succeed in the semi-honest setting. Meanwhile, data samples in the two recommendation datasets are similar and embedded with the same feature vectors. This leads to similar reconstruction results and similar MSEs because the reconstruction of structured data in MovieLens and BookCrossing largely depends on the embedding module. But we can still conclude from the results that our solution has a dominant performance in the defense against reconstruction attacks on either side. For the CIFAR100 model, when at a low privacy budget, the primitive Laplace mechanism sacrifices the prediction accuracy to reach a high difference. However, our method can maintain a satisfying prediction accuracy while protecting against the reconstruction attack.

\subsubsection{Defense against feature space hijacking attack (FSHA)}
Please note that property inference and data reconstruction attacks implemented in FSHA \cite{Pasquini2021UnleashingTT} hijack the learning objective, offering the adversary an advantage over the previous attacks we have evaluated. In this setting, the malicious attacker trains a generator using SplitNN as a discriminator during the learning process. And a gradient-scaling trick is used to train the generator in FSHA. The sample generating process is essential to FSHA, meaning inference attacks depend on the reconstruction in FSHA. Thus, we will focus on the evaluation of defense against reconstruction attacks. If the generating part fails, the inference attack will be impossible. Since FSHA is comprehensively evaluated using the MNIST dataset, we give defense results of the MNIST model here. In Figure~\ref{fig:recovered_guest_minist} and Figure~\ref{fig:recovered_host_minist}, we give the reconstruction results of FHSA mounted by an adversarial host and an adversarial guest against the target samples used in \cite{Pasquini2021UnleashingTT}, respectively. The second row of the two figures shows the results of FSHA against baselines. The following rows show that our solution can effectively preserve private data for both the guest and the host, even if the privacy budget is relaxed to 4. The results of our solution against FSHA using other datasets and the results of different budget values are in the appendix.
\begin{figure*}[ht]
    \begin{minipage}[t]{1\linewidth}
    \centering
    \includegraphics[width=1\linewidth]{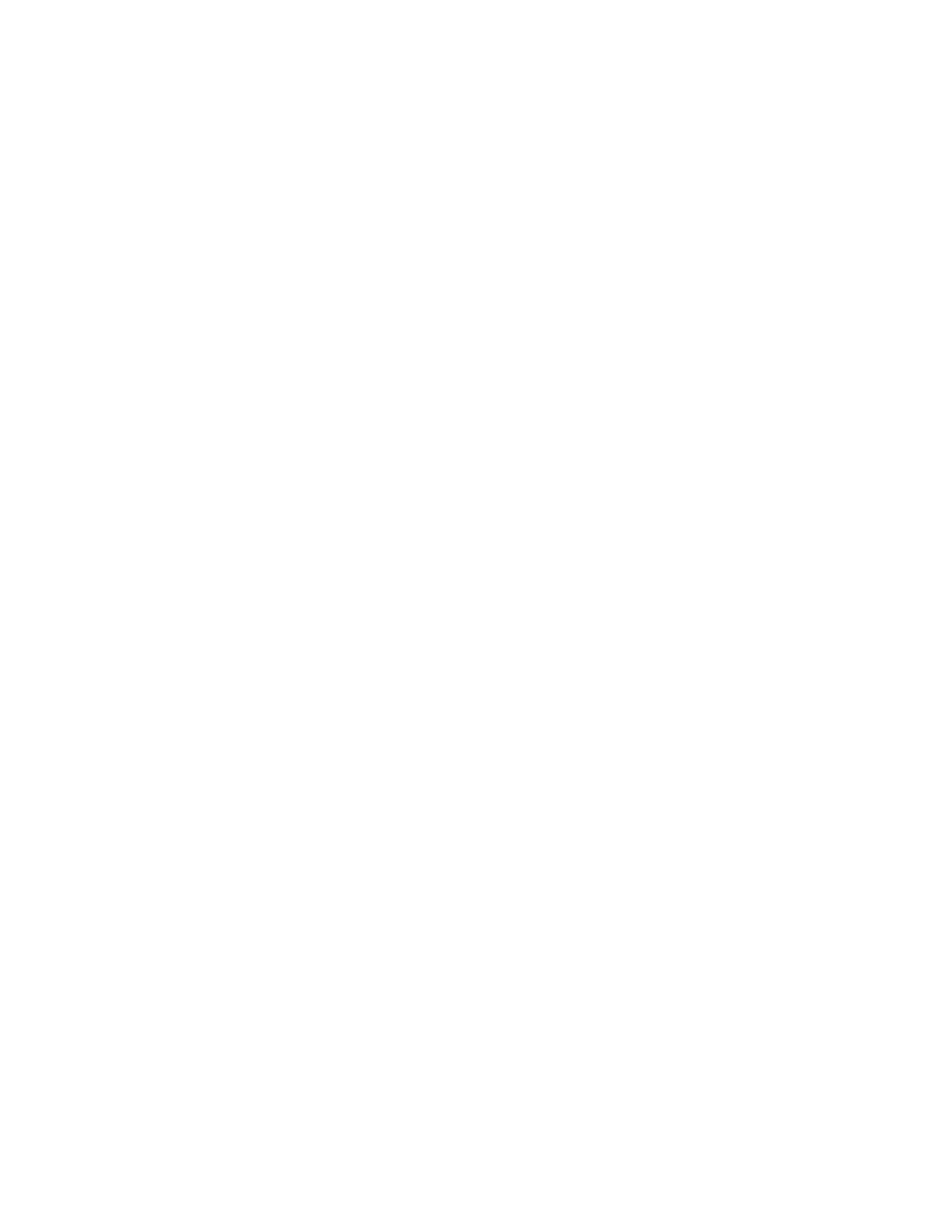}
    \end{minipage}\par\vspace{-1.2ex}
    \begin{minipage}[t]{1\linewidth}
    \centering
    \includegraphics[width=1\linewidth]{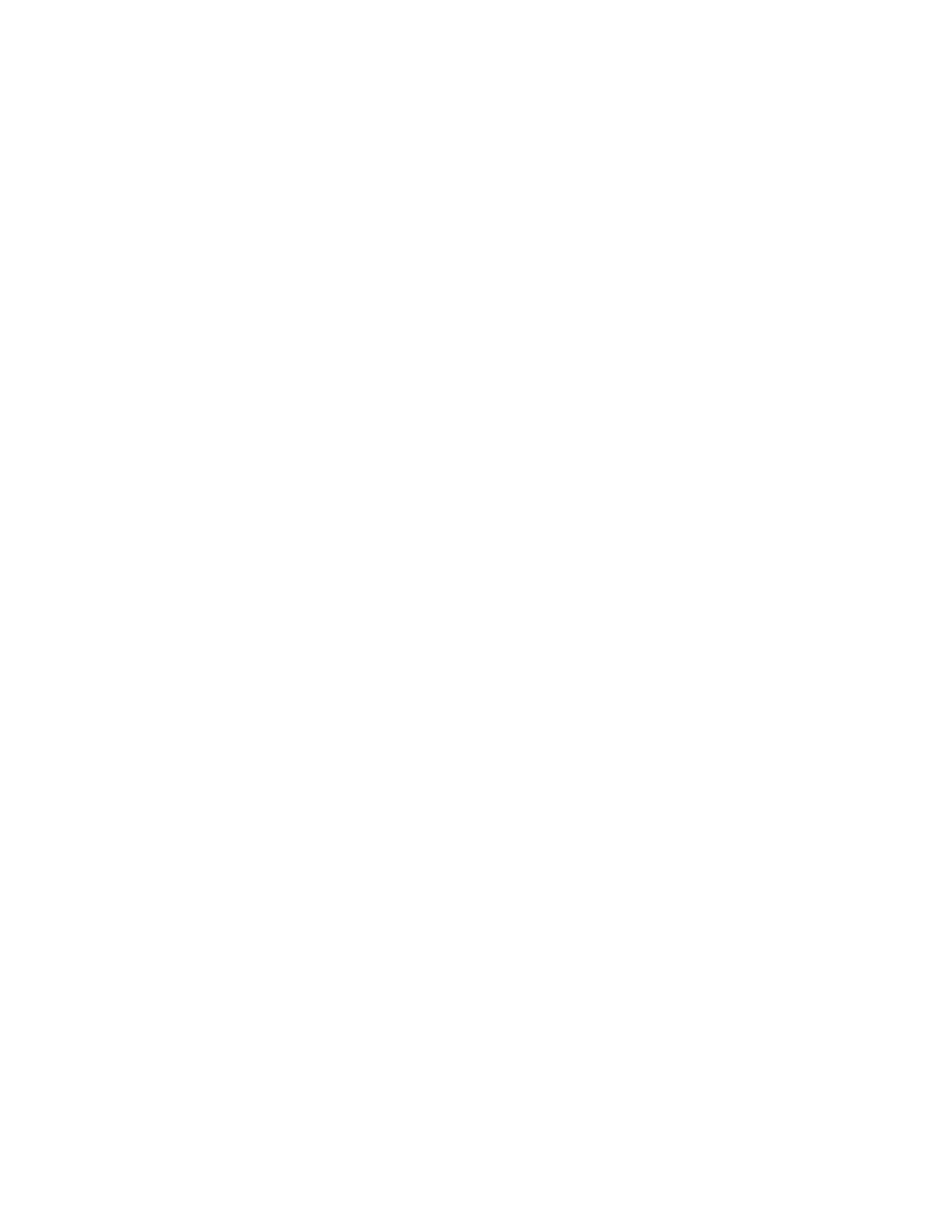}
    \end{minipage}\par\vspace{-1ex}
    \begin{minipage}[t]{1\linewidth}
    \centering
    \includegraphics[width=1\linewidth]{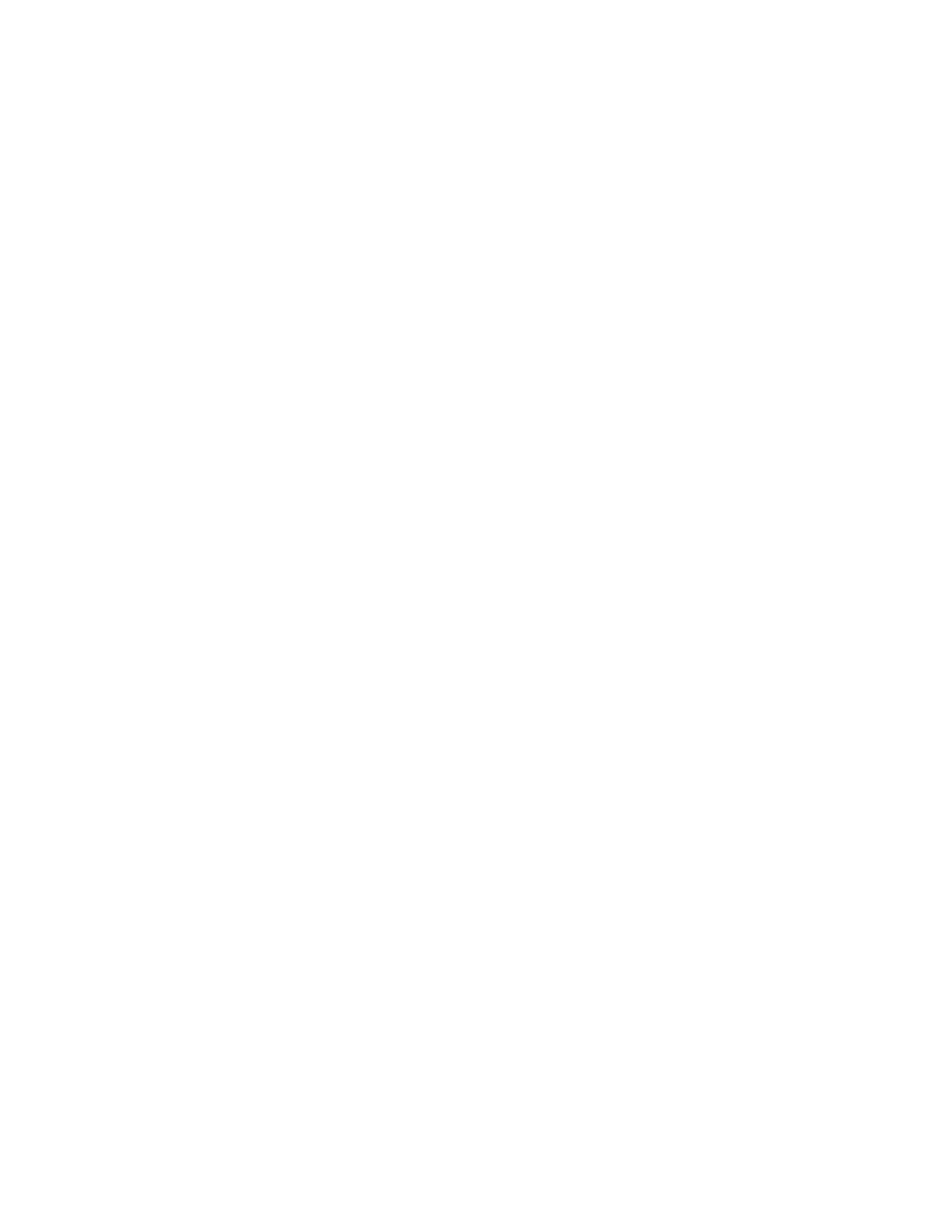}
    \end{minipage}\par\vspace{-1ex}
    \begin{minipage}[t]{1\linewidth}
    \centering
    \includegraphics[width=1\linewidth]{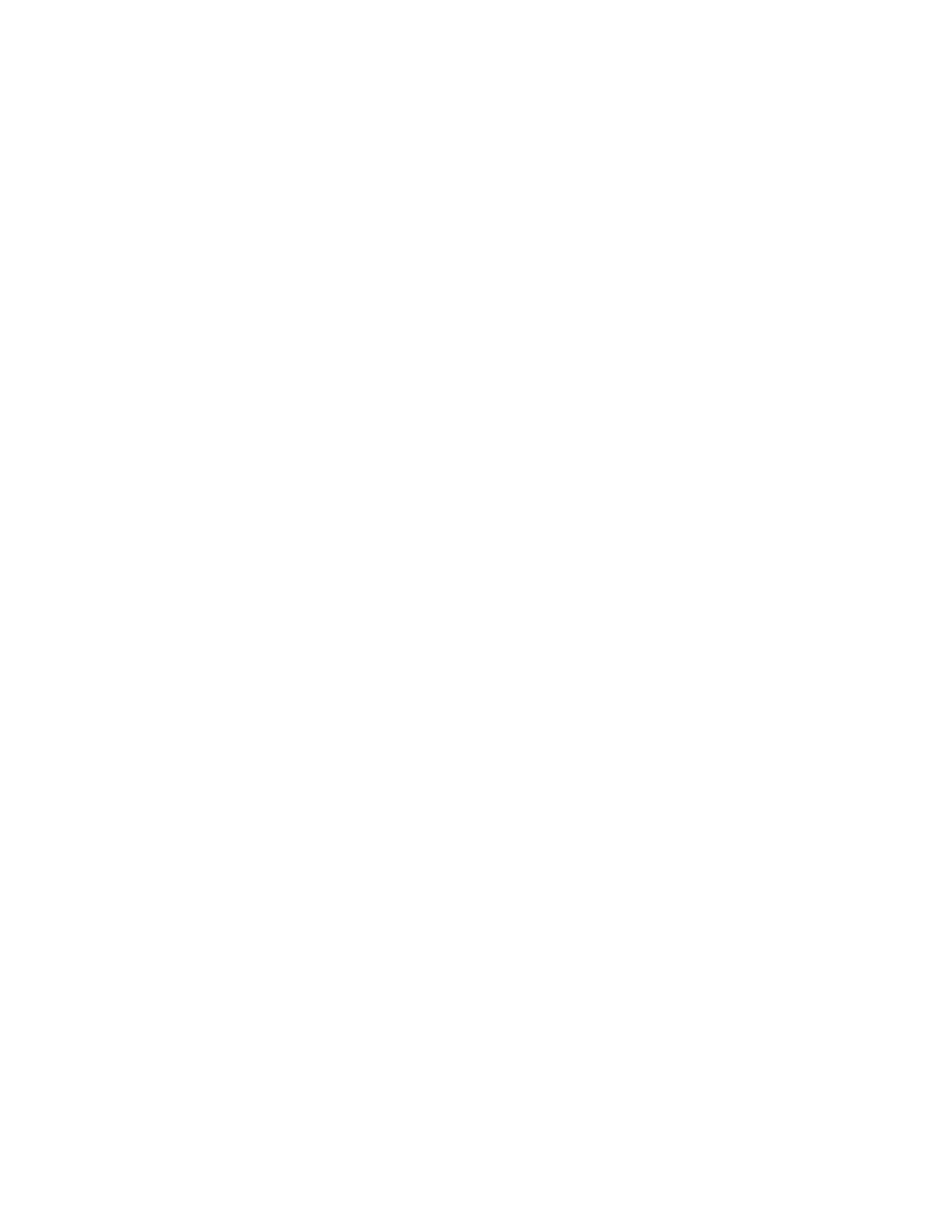}
    \end{minipage}\par\vspace{-1ex}
    \begin{minipage}[t]{1\linewidth}
    \centering
    \includegraphics[width=1\linewidth]{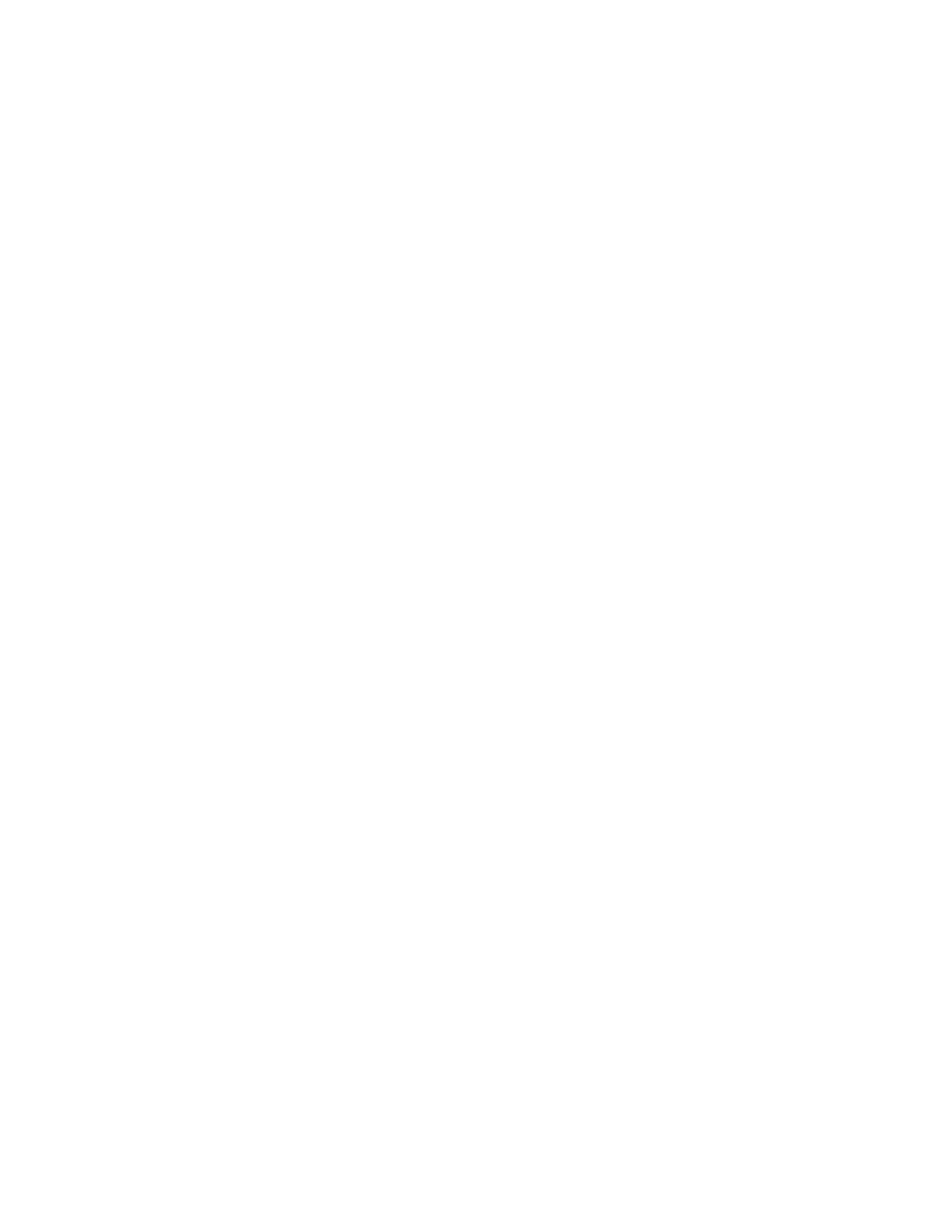}
    \end{minipage}\par\vspace{-1ex}
    \caption{Reconstruction results of FSHA against the guest's data in the first row. The following rows are attack results against the original SplitNN and our solution ($\epsilon=0.1, 1.0, 4.0$), respectively.}
    \label{fig:recovered_guest_minist}
\end{figure*}
\begin{figure*}[ht]
    \vspace{-1ex}
    \begin{minipage}[t]{1\linewidth}
    \centering
    \includegraphics[width=0.8\linewidth]{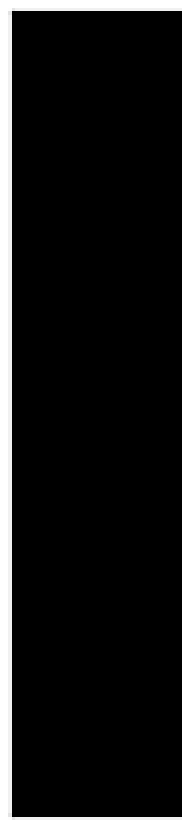}
    \end{minipage}\par\vspace{-1.2ex}
    \begin{minipage}[t]{1\linewidth}
    \centering
    \includegraphics[width=0.8\linewidth]{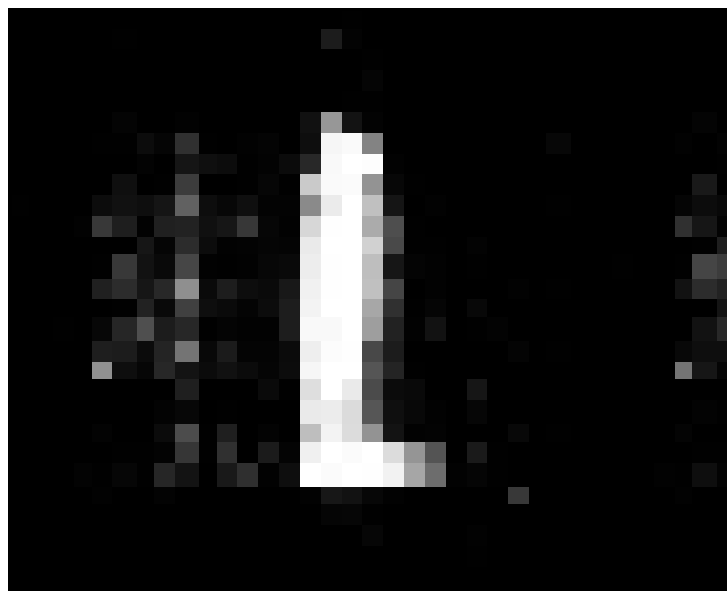}
    \end{minipage}\par\vspace{-1ex}
    \begin{minipage}[t]{1\linewidth}
    \centering
    \includegraphics[width=0.8\linewidth]{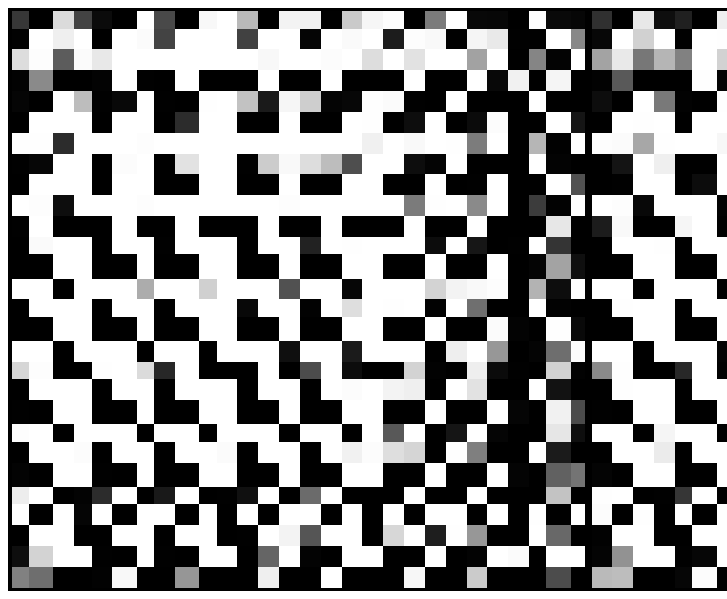}
    \end{minipage}\par\vspace{-1ex}
    \begin{minipage}[t]{1\linewidth}
    \centering
    \includegraphics[width=0.8\linewidth]{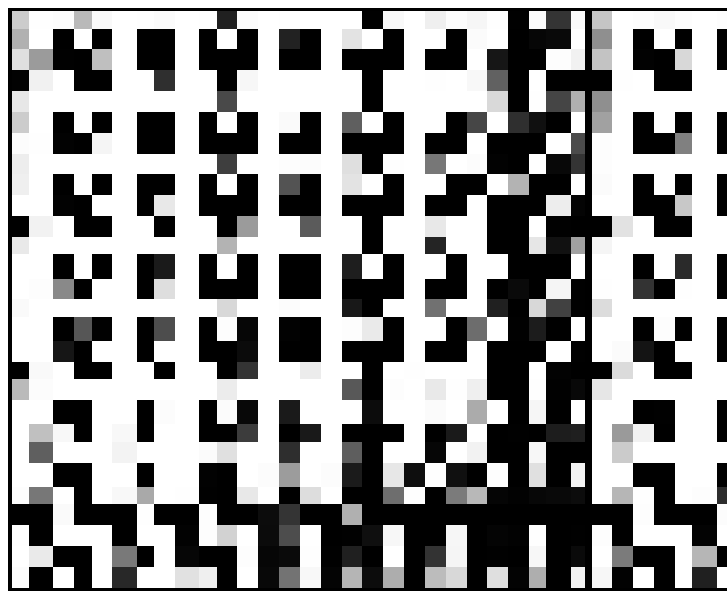}
    \end{minipage}\par\vspace{-1ex}
    \begin{minipage}[t]{1\linewidth}
    \centering
    \includegraphics[width=0.8\linewidth]{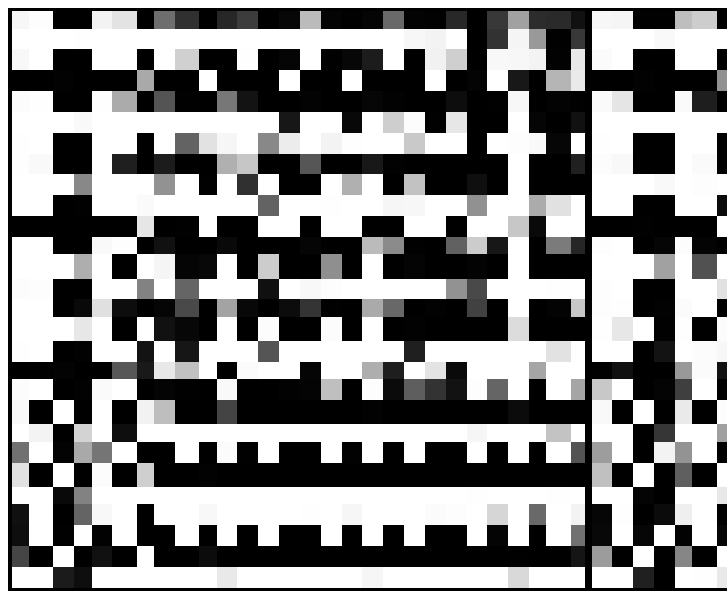}
    \end{minipage}\par\vspace{-1ex}
    \caption{Reconstruction results of FSHA against the host's data in the first row. The following rows are attack results against the original SplitNN and our solution ($\epsilon=0.1, 1.0, 4.0$), respectively.}
    \label{fig:recovered_host_minist}
\end{figure*}

\section{Conclusion}
We investigate privacy leakage issues in SplitNN for two parties. By mounting property inference, data reconstruction, and hijacking attacks, we confirm that both the host and the guest in SplitNN are under severe threats of privacy leakage. To mitigate the leakage, we design a new activation function R\textsuperscript{3}eLU and its derivative in a randomized-response manner. By integrating R\textsuperscript{3}eLU into SplitNN as an interacting tunnel, we implement R\textsuperscript{3}eLU-forward and R\textsuperscript{3}eLU-backward procedures. Through the privacy analysis, we confirm that SplitNN using R\textsuperscript{3}eLU-forward and R\textsuperscript{3}eLU-backward provides differential privacy for both two parties. Moreover, we propose a fine-grained privacy budget allocation scheme for assigning privacy budgets dynamically according to the parameters' importance. We finally conclude that our SplitNN solution outperforms the existing privacy-preserving solutions in model usability and privacy preservation through a comprehensive evaluation of different learning tasks. We also note that our solution concentrates on the leakage of private property and data samples. Other privacy issues, like label leakage in SplitNN, should be discussed separately. It is now unclear whether randomized-response solutions can deal with label leakage, which will be our work in the next.

\section*{Acknowledgement}
The authors would like to thank our shepherd Prof. Stjepan Picek and the anonymous reviewers for the time and effort they have kindly put into this paper. Our work has been improved by following the suggestions they have made. This work was supported in part by the Leading-edge Technology Program of Jiangsu-NSF under Grant BK20222001 and the National Natural Science Foundation of China under Grants NSFC-62272222, NSFC-61902176, NSFC-62272215.

\clearpage
\bibliographystyle{splncs04}
\bibliography{ppsplit}

\clearpage
\appendix
\section*{Appendix}

\subsection{Supplement Results of Evaluation}
To further investigate how our solution affects the learning process of SplitNN, we report learning results of a MovieLens recommendation model protecting the privacy of the guest and the host in Figure~\ref{fig:Guest Accuracy on MovieLens} and \ref{fig:Host accuray on Movielens}, respectively. In each plot, we show trends of training accuracy and testing accuracy as the training epoch increases. When $\epsilon=0.1$ for the guest or the host, model usability will be influenced seriously. In this case, the artificial perturbation is too large to maintain precise data characteristics. Things get much better when the privacy budget increases to 1 for either the guest or the host. Thus, we can conclude from the figures that our solution achieves satisfying model usability even with a small privacy budget for either side of SplitNN.

We notice that FSHA is sensitive to our privacy-preserving SplitNN solution. In Figure~\ref{fig:recovered_guest_minist} and Figure~\ref{fig:recovered_host_minist}, we find that neither an adversarial host nor an adversarial guest can reconstruct meaningful samples, even if the target's privacy budget is $\epsilon=4$. Therefore, we are curious about a practical choice of the privacy budget when dealing with FSHA. To this end, we give more defense results of our solution against FSHA using various privacy budget values in Table~\ref{tab:defense_fsha_host_budget}. We can conclude from the MSE results that two recommendation models prefer relatively low privacy budgets, such as $\epsilon=1.0$. However, it is interesting to see that the FSHA attack against an image classification model using SplitNN can be frustrated by our solution using a relatively high privacy budget, which also means a high model usability.
\begin{figure*}[ht]
    \centering
    \begin{minipage}{0.3\linewidth}
    \centering
    \includegraphics[scale=0.28]{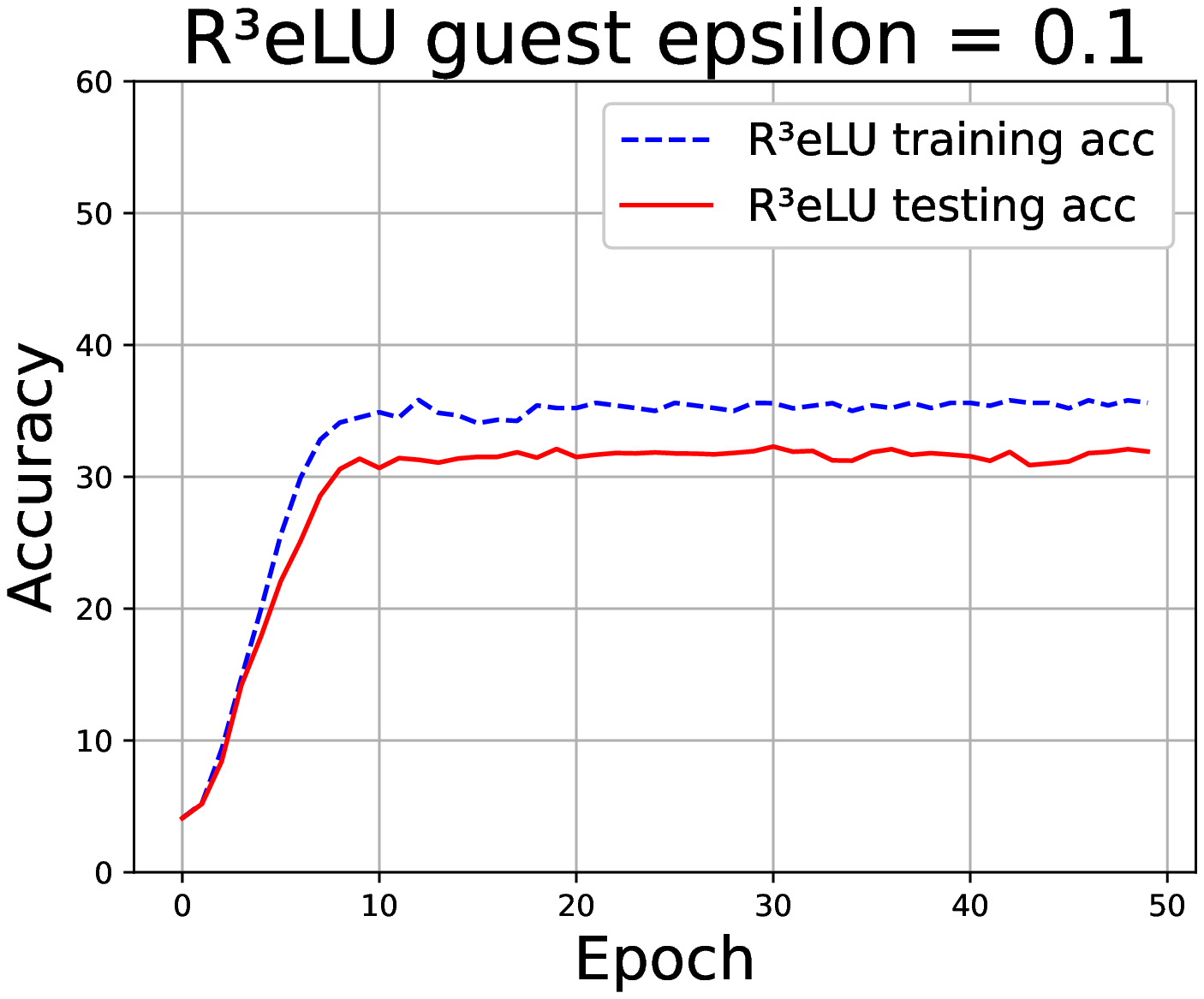}
    \end{minipage}
    \hfill
    \begin{minipage}{0.3\linewidth}
    \centering
    \includegraphics[scale=0.28]{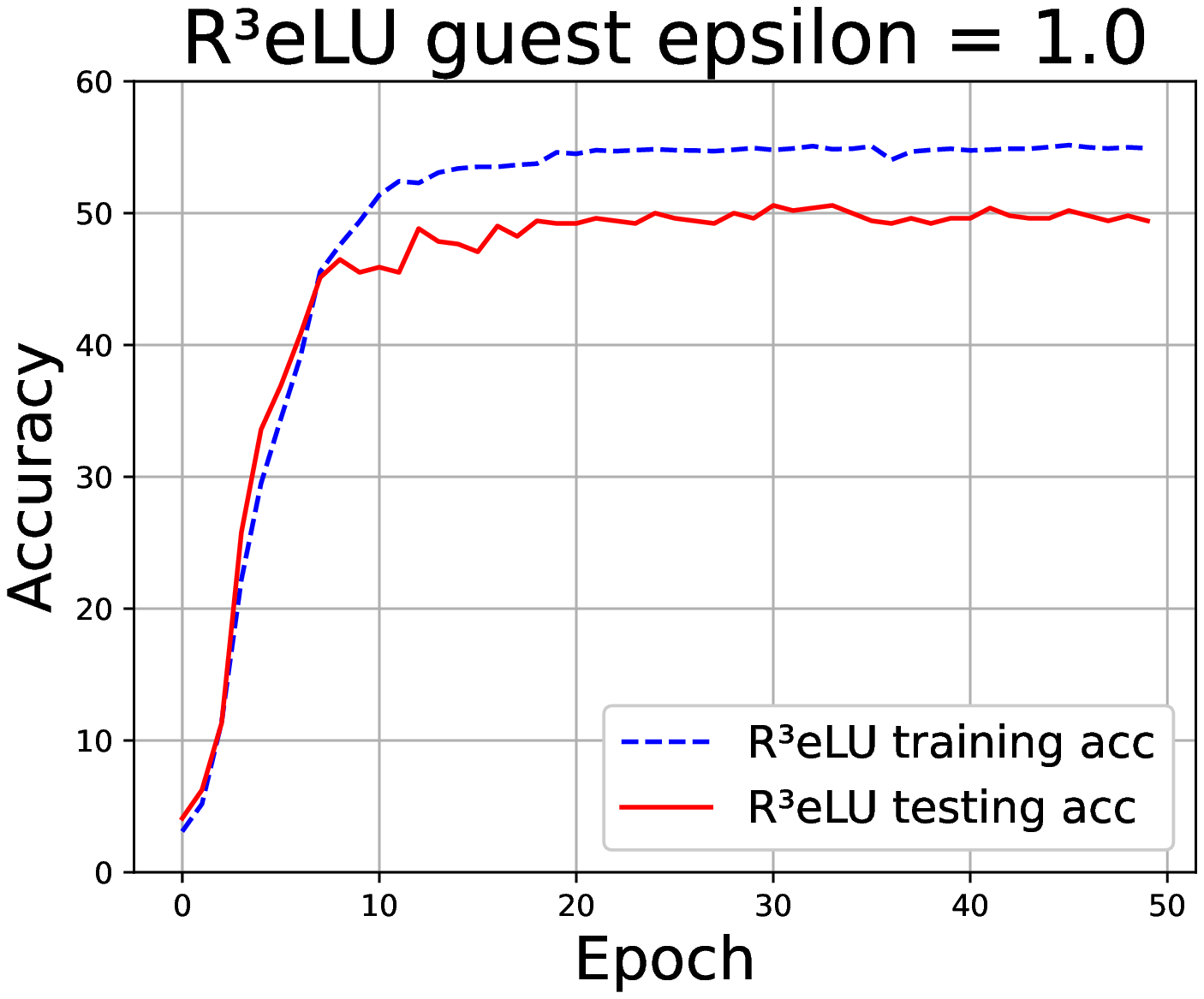}
    \end{minipage}
    \hfill
    \begin{minipage}{0.3\linewidth}
    \centering
    \includegraphics[scale=0.28]{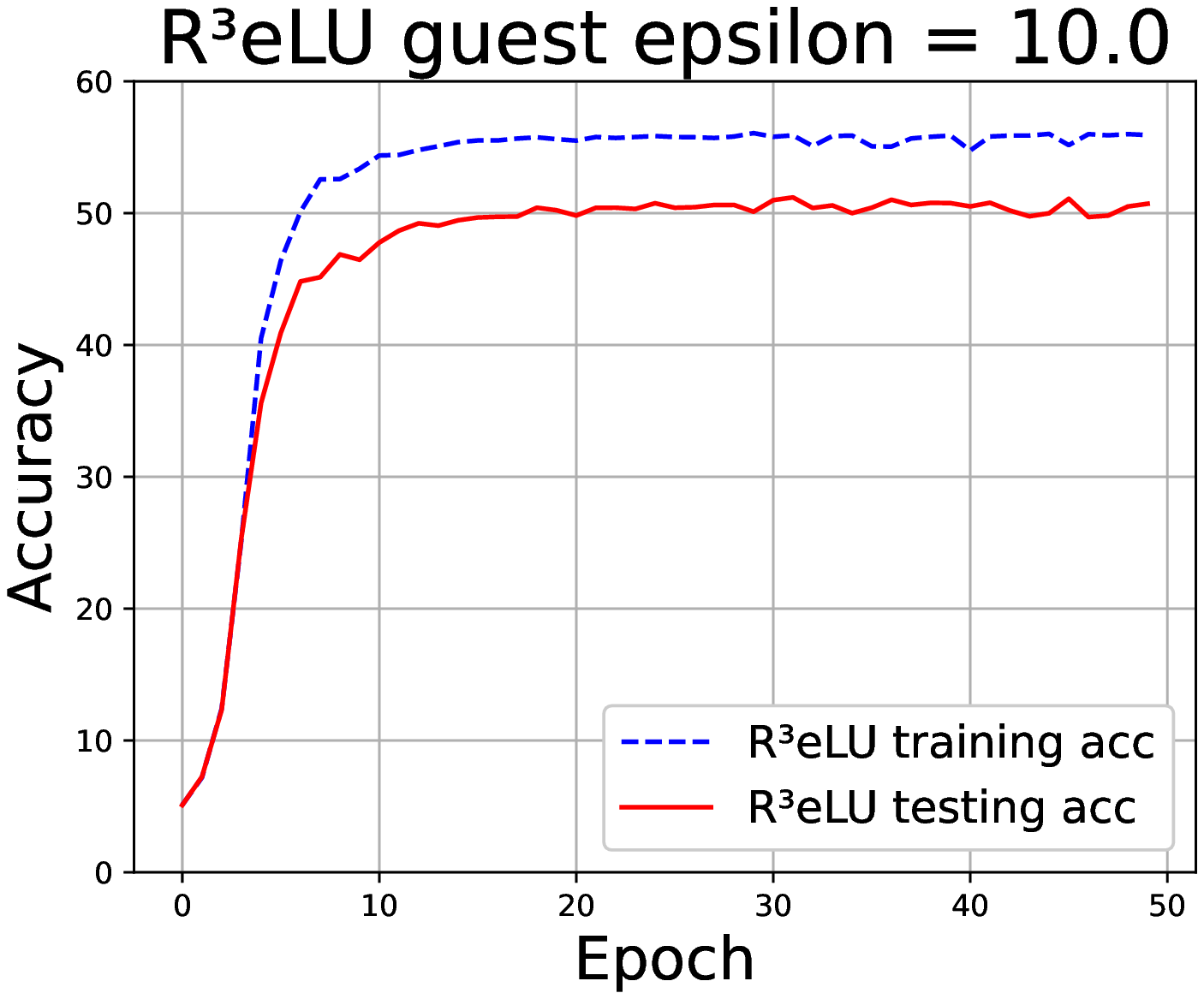}
    \end{minipage}
    \caption{SplitNN learning curve with the guest's privacy protected by our solution.}
    \label{fig:Guest Accuracy on MovieLens}
\end{figure*}
\begin{figure*}[ht]
    \centering
    \begin{minipage}{0.3\linewidth}
    \centering
    \includegraphics[scale=0.28]{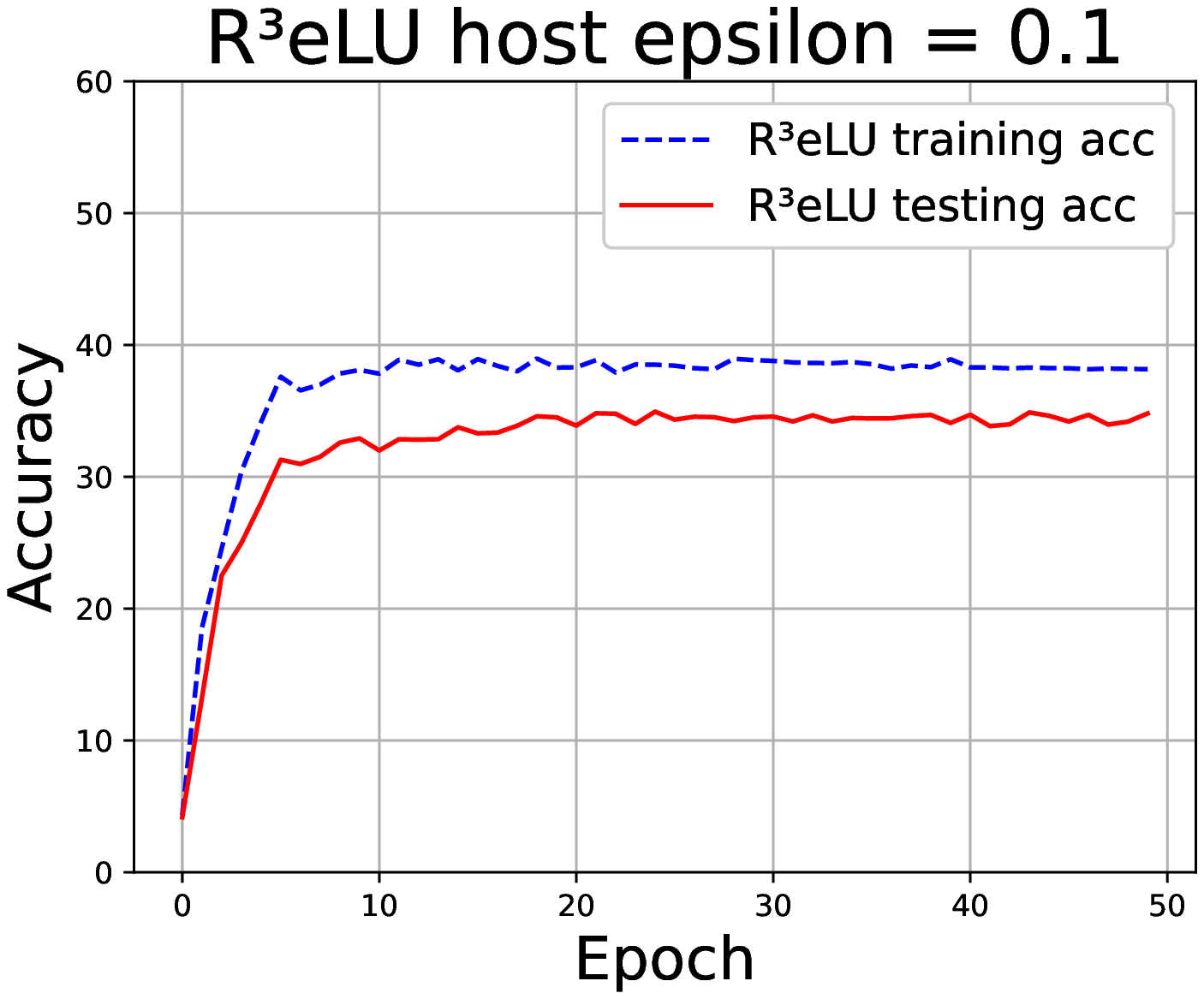}
    \end{minipage}
    \hfill
    \begin{minipage}{0.3\linewidth}
    \centering
    \includegraphics[scale=0.28]{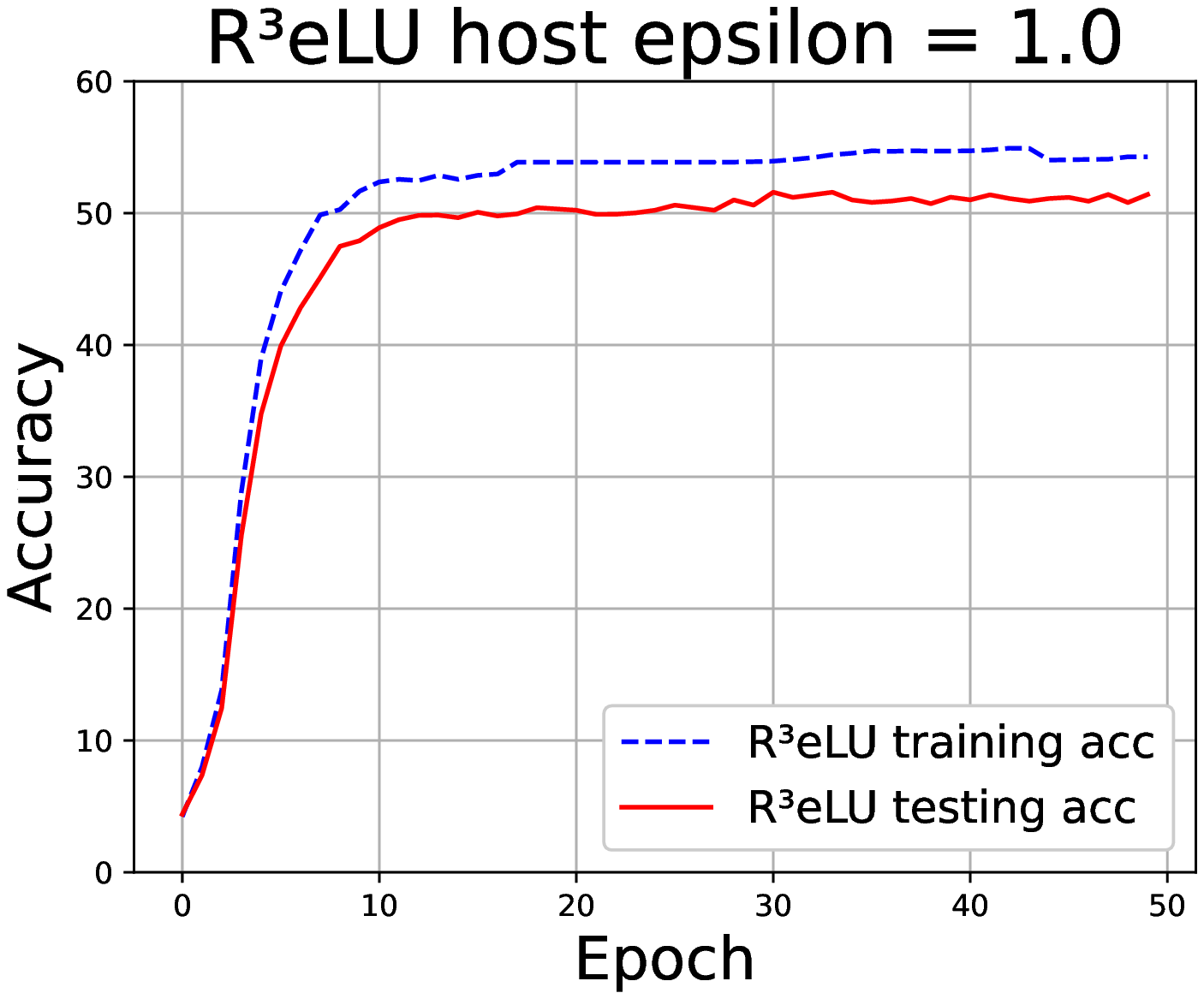}
    \end{minipage}
    \hfill
    \begin{minipage}{0.3\linewidth}
    \centering
    \includegraphics[scale=0.28]{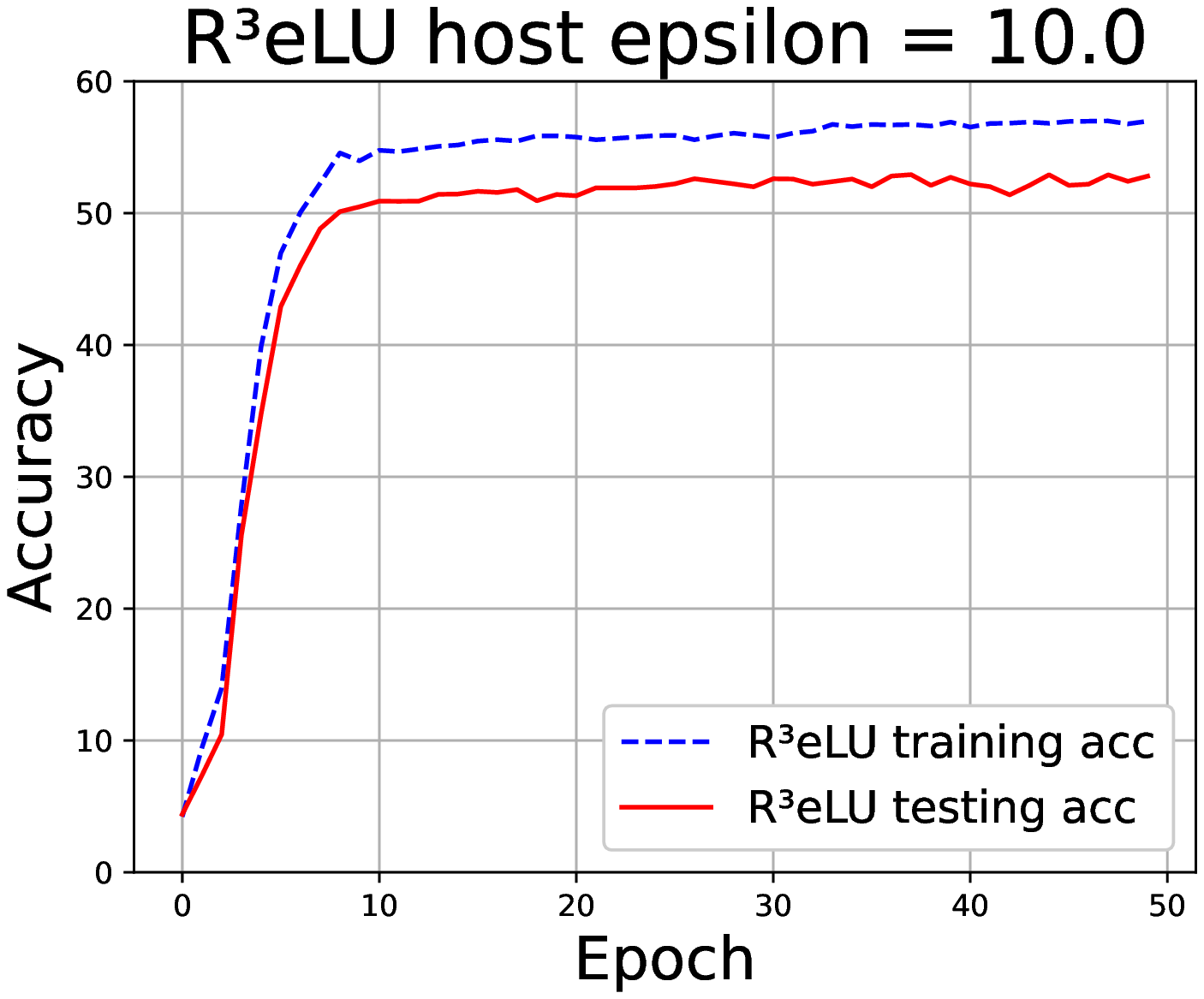}
    \end{minipage}
    \caption{SplitNN learning curve with the host's privacy protected by our solution.}
    \label{fig:Host accuray on Movielens}
\end{figure*}


\begin{table}[ht]
\caption{Defense against an adversarial FSHA host using various budgets.}
\label{tab:defense_fsha_host_budget}
\small
\begin{tabular}{c|cc|cc|cc}
\hline
\multirow{2}{*}{$\epsilon$} & \multicolumn{2}{c|}{MovieLens}                                                                                               & \multicolumn{2}{c|}{BookCrossing}
                                    & \multicolumn{2}{c}{MNIST}
                                        \\ \cline{2-7} 
                          & baseline   & \textbf{Ours}  & baseline& \multicolumn{1}{c|}{\textbf{Ours}} & baseline & \textbf{Ours} \\
\cline{2-7} &1200 Epochs &\multicolumn{1}{c|}{5000 Epochs} &1200 Epochs &\multicolumn{1}{c|}{5000 Epochs} &9000 Epochs 
&\multicolumn{1}{c}{9000 Epochs}
                          \\ \hline
$0.1$  & \multirow{9}{*}{0.2652 $\times 10^{-3}$} & \multicolumn{1}{c|}{\textbf{455.5676}} &  \multirow{9}{*}{0.2365 $\times 10^{-3}$} & \multicolumn{1}{c|}{\textbf{500.1487}} & 
\multirow{9}{*}{0.0206} & \multicolumn{1}{c}{\textbf{1.98257}}\\

$0.25$ &   &\multicolumn{1}{c|}{\textbf{73.7824}} &  &\multicolumn{1}{c|}{\textbf{83.7606}} &  &\multicolumn{1}{c}{\textbf{1.9788}} \\
$0.5$ &   &\multicolumn{1}{c|}{\textbf{21.1706}} &  &\multicolumn{1}{c|}{\textbf{30.2905}} &  &\multicolumn{1}{c}{\textbf{1.9703}} \\
$0.75$ &   &\multicolumn{1}{c|}{\textbf{9.5312}} &  &\multicolumn{1}{c|}{\textbf{8.5984}} &  &\multicolumn{1}{c}{\textbf{1.9534}} \\
$1.0$ &   &\multicolumn{1}{c|}{\textbf{4.4903}} &  &\multicolumn{1}{c|}{\textbf{6.1267}} &  &\multicolumn{1}{c}{\textbf{1.9442}} \\
$2.0$ &   &\multicolumn{1}{c|}{\textbf{1.1559}} &  &\multicolumn{1}{c|}{\textbf{1.7047}} &  &\multicolumn{1}{c}{\textbf{1.9283}} \\
$4.0$ &   &\multicolumn{1}{c|}{\textbf{0.2719}} &  &\multicolumn{1}{c|}{\textbf{0.2478}} &  &\multicolumn{1}{c}{\textbf{1.9209}} \\
$6.0$ &   &\multicolumn{1}{c|}{\textbf{0.1091}} &  &\multicolumn{1}{c|}{\textbf{0.1638}} &  &\multicolumn{1}{c}{\textbf{1.9135}} \\
$8.0$ &   &\multicolumn{1}{c|}{\textbf{0.0975}} &  &\multicolumn{1}{c|}{\textbf{0.0846}} &  &\multicolumn{1}{c}{\textbf{1.9116}} \\
\hline
\end{tabular}
\end{table}

In Table~\ref{tab:benchmark-cutlayer}, we give a benchmark of SplitNN using different cut layers with two public datasets, \cite{harper2015Movielens} and BookCrossing \cite{ziegler2005improving}. We also give the top-10 hit ratio for the test in Table~\ref{tab:benchmark-cutlayer}. We use min as our merging strategy. We combine one linear layer with one ReLU as one cut layer. We notice that there is little difference between different cut layers. However, considering the computational cost at the guest part, it is a good tradeoff between computational cost and model usability to select the first layer as the cut layer.
\begin{table}[ht]
\centering
\caption{Top-10 hit ratio (\%) of SplitNN using different cutlayers.}
\label{tab:benchmark-cutlayer}
\begin{tabular}{ccccccc}
\hline

\multicolumn{2}{c}{}    &layer1     &layer2     &layer3     &layer4     &no split
                                                                            \\ \hline

\multicolumn{1}{c|}{\multirow{2}{*}{MovieLens}}                     & padding     & \textbf{57.19} & 56.97 & 56.72 & 57.07 & \multirow{2}{*}{57.21}   \\ \cline{2-6}
\multicolumn{1}{c|}{}   & non-padding & 55.08 & 55.76 & \textbf{56.94} & 56.75 &     \\ \hline
\multicolumn{1}{c|}{\multirow{2}{*}{Book Crossing}} & padding  & 60.98 & 60.98 & 61.24 & \textbf{61.12} & \multirow{2}{*}{61.92}  \\ \cline{2-6}
\multicolumn{1}{c|}{}  & non-padding & 59.02 & 58.93 & \textbf{60.16} & 59.83 &      \\ \hline
\end{tabular}
\end{table}

\subsection{Model Architecture}
The neural networks we used for MovieLens, BookCrossing, MNIST, and CIFAR100 datasets after a split are shown in Table~\ref{tab:arch}. These networks are widely used in related studies. We split them according to the interpretation of SplitNN in previous studies \cite{ceballos2020splitnn,Pasquini2021UnleashingTT}.

\begin{table}[ht]
\centering
\caption{Model architectures used for evaluation.}
\label{tab:arch}
\begin{tabular}{cccccc}
\hline
\multicolumn{3}{c|}{MovieLens Model}                                            & \multicolumn{3}{c}{BookCrossing Model}                                  \\ \hline
Guest Layers            & Dim.               & \multicolumn{1}{c|}{Param.} & Guest Layers              & Dim.               & Param. \#            \\
Linear(160,128)+ReLU    & 128                  & \multicolumn{1}{c|}{20608}     & Linear(160,128)+ReLU      & 128                  & 20608                \\
\multicolumn{1}{l}{}    & \multicolumn{1}{l}{} & \multicolumn{1}{l|}{}          & \multicolumn{1}{l}{}      & \multicolumn{1}{l}{} & \multicolumn{1}{l}{} \\
Host Layers             & Dim.               & \multicolumn{1}{c|}{Param.} & Host Layers               & Dim.               & Param.            \\
Linear(160,128)+ReLU    & 128                  & \multicolumn{1}{c|}{20608}     & Linear(160,128)+ReLU      & 128                  & 20608                \\
Merge Guest Output      &                      & \multicolumn{1}{c|}{}          & Merge Guest Output        &                      &                      \\
Linear(128,128)+ReLU    & 128                  & \multicolumn{1}{c|}{16512}     & Linear(128,256)+ReLU      & 256                  & 33024                \\
Linear(128,64)+ReLU     & 64                   & \multicolumn{1}{c|}{8256}      & Linear(256,128)+ReLU      & 128                  & 32894                \\
Linear(64,3952)+Softmax & 3952                 & \multicolumn{1}{c|}{256880}    & Linear(128,17384)+Softmax & 17384                & 2242536              \\ \hline
\multicolumn{6}{c}{MNIST Model}                                                                                                                           \\ \hline
\multicolumn{4}{c}{Guest Layers}                                                                            & Dim.               & Param.          \\
\multicolumn{4}{c}{Linear(28*14,128)+BatchNormalization+ReLU}                                               & 128                  & 50304                \\
\multicolumn{4}{c}{Linear(128,64)}                                                                          & 64                   & 8256                 \\
\multicolumn{4}{l}{}                                                                                        & \multicolumn{1}{l}{} & \multicolumn{1}{l}{} \\
\multicolumn{4}{c}{Host Layers}                                                                             & Dim.               & Param.           \\
\multicolumn{4}{c}{Linear(28*14,128)+BatchNormalization+ReLU}                                               & 128                  & 50304                \\
\multicolumn{4}{c}{Linear(128,64)+BatchNormalization+ReLU}                                                  & 64                   & 8256                 \\
\multicolumn{4}{c}{Merge Guest Output}                                                                      &                      &                      \\
\multicolumn{4}{c}{Linear(64,64)+BatchNormalization+ReLU}                                                   & 64                   & 4160                 \\
\multicolumn{4}{c}{Linear(64,10)+Softmax}                                                                   & 10                   & 650                  \\ \hline

\multicolumn{6}{c}{Cifar100 Model}                                     \\ \hline
\multicolumn{4}{c}{Guest Layers}                                                                            & Dim.               & Param.          \\
\multicolumn{4}{c}{conv1\_x = Conv2D(3, 64, kernel=3, padding=1)}                              & [1, 64, 32, 32]            & 1728\\
\multicolumn{4}{c}{+ BatchNormalization+ReLU}    & [1, 64, 32, 32]            & 128\\
\multicolumn{4}{c}{conv2\_x = BasicBlock(64, 64, stride=1)}               & [1, 64, 32, 32]            & 73984\\
\multicolumn{4}{c}{+ BasicBlock(64, 64, stride=1)}               & [1, 64, 32, 32]            & 73984\\
\multicolumn{4}{l}{}                                                                                        & \multicolumn{1}{l}{} & \multicolumn{1}{l}{} \\
\multicolumn{4}{c}{Host Layers}                                                                             & Dim.               & Param.           \\
\multicolumn{4}{c}{conv1\_x = Conv2D(3, 64, kernel=3, padding=1)}                              & [1, 64, 32, 32]            & 1728\\
\multicolumn{4}{c}{+ BatchNormalization+ReLU}    & [1, 64, 32, 32]            & 128\\
\multicolumn{4}{c}{conv2\_x = BasicBlock(64, 64, stride=1)}               & [1, 64, 32, 32]            & 73984\\
\multicolumn{4}{c}{+ BasicBlock(64, 64, stride=1)}               & [1, 64, 32, 32]            & 73984\\
\multicolumn{4}{c}{Merge Guest Output}                                                                      &                      &                      \\
\multicolumn{4}{c}{conv3\_x = BasicBlock(64, 128, stride=2)}              & [1, 128, 16, 16]            & 230,144\\
\multicolumn{4}{c}{+ BasicBlock(128, 128, stride=2)}              & [1, 128, 16, 16]            & 295,424\\
\multicolumn{4}{c}{conv4\_x = BasicBlock(128, 256, stride=2)}              & [1, 256, 8, 8]            & 919,040\\
\multicolumn{4}{c}{ + BasicBlock(256, 256, stride=2)}              & [1, 256, 8, 8]            & 1,180,672\\
\multicolumn{4}{c}{conv5\_x = BasicBlock(256, 512, stride=2)}              & [1, 512, 4, 4]            & 3,673,088\\
\multicolumn{4}{c}{+ BasicBlock(512, 512, stride=2)}              & [1, 512, 4, 4]            & 4,720,640\\
\multicolumn{4}{c}{AdaptiveAvgPool2d(1, 1)+Linear(512, 1000)}                                                   & [1, 100]              & 51300\\ \hline
\end{tabular}
\end{table}

\end{document}